\documentclass[11pt]{article}
	
    \makeatletter
    \renewcommand\section{\@startsection {section}{1}{\z@}%
                                       {-3.5ex \@plus -1ex \@minus -.2ex}%
                                       {2.3ex \@plus.2ex}%
                                       {\normalfont\fontfamily{phv}\fontsize{16}{19}\bfseries}}
    \renewcommand\subsection{\@startsection{subsection}{2}{\z@}%
                                         {-3.25ex\@plus -1ex \@minus -.2ex}%
                                         {1.5ex \@plus .2ex}%
                                         {\normalfont\fontfamily{phv}\fontsize{14}{17}\bfseries}}
    \renewcommand\subsubsection{\@startsection{subsubsection}{3}{\z@}%
                                        {-3.25ex\@plus -1ex \@minus -.2ex}%
                                         {1.5ex \@plus .2ex}%
                                         {\normalfont\normalsize\fontfamily{phv}\fontsize{14}{17}\selectfont}}
    \makeatother
	
	\usepackage{amsmath}
	\usepackage{graphicx}
	\usepackage{enumerate}
	\usepackage{xcolor}
	\usepackage{natbib} 
	\usepackage{url} 
    \usepackage{epstopdf}
	\usepackage{enumerate}
	\usepackage{amsmath}
	\usepackage{algpseudocode}
	\usepackage{float}
	\usepackage{color}
	\usepackage{makeidx}
	\usepackage{bbm}
	\usepackage{graphicx}
	\usepackage{caption}
	\usepackage{xparse}
	\usepackage{amsmath}
	\usepackage{amsfonts}
	\usepackage{mathtools}
	\usepackage{algorithm}
	\usepackage{steinmetz}
	\usepackage{varwidth}
	\usepackage{cleveref}
	\usepackage{url}
	\usepackage{upgreek}
	\usepackage{amssymb}
	\usepackage[utf8]{inputenc}
	\usepackage[english]{babel}
	\usepackage{amsmath}
	\usepackage{multirow}
	\usepackage{amssymb}
	\usepackage{enumitem}
	\usepackage{scrextend}
	\usepackage{amsmath}
	\usepackage{tikz-network}
	\usepackage{array}
	\usepackage{tabularx}	         \usetikzlibrary{arrows,decorations.pathmorphing,backgrounds,positioning,fit,petri,snakes}
        \usetikzlibrary{positioning}       
	\usepackage{float}
	\usepackage{mathtools}

	\usepackage{caption}
	\usepackage{booktabs}

	\usepackage[english]{babel}
	\usepackage{amsthm}
	\theoremstyle{definition} 
	\newtheorem{definition}{Definition}[section] 
	\usepackage[english]{babel}
	\newtheorem{theorem}{Theorem}[section] 
	\newtheorem{exmp}{Example}[section] 
	\usepackage{longtable}
	\usepackage{xcolor}
	\usepackage{caption}
	\usepackage{subcaption}
	\allowdisplaybreaks
	\usepackage{amsmath}
	
    \newcommand{\bla}{\color{black}}
    
    \usepackage{refcount}
    \usepackage{makecell}
    \newcommand{\lV}{\vspace{-1mm}}
    \newcommand{\llV}{\vspace{-2mm}}
    \newcommand{\lllV}{\vspace{-5mm}}

    \newcommand{\mmV}{\vspace{2mm}}
    
    \usepackage{appendix}
    \newtheorem*{theorem*}{Theorem}
    
\begin{document}
		
		\def\spacingset#1{\renewcommand{\baselinestretch}%
			{#1}\small\normalsize} \spacingset{1}
		
		{
    \title{\bf \emph{The Edge-based Contiguous $p$-median Problem with Connections to Logistics Districting}}
    
    \author{
        Zeyad Kassem $^a$ and Adolfo Escobedo $^b$ \\
        $^a$ School of Computing and Augmented Intelligence, Arizona State University,\\
        Tempe, Arizona, 85281\\
        $^b$ Edward P. Fitts Department of Industrial and Systems Engineering,\\
        North Carolina State University, Raleigh, North Carolina, 27606
    }
    
    \date{}
    \maketitle
}

	\begin{abstract}
    
This paper introduces the edge-based contiguous $p$-median (EC$p$M) problem to partition the roads in a network into a given number of compact and contiguous territories. Two binary programming models are introduced, both of which incorporate a network distance. The first model requires an exponential number of cut set-based constraints to model contiguity; it is paired with a separation scheme that usually generates only a small number of these constraints, namely, a branch-and-cut (B\&C) algorithm. The second model utilizes a polynomial number of shortest-path constraints to model contiguity and can be solved with off-the-shelf solvers. The respective solution approaches are tested on road networks with over $2,700$ nodes and close to $3,400$ edges, yielding models with over $9.6$ million binary variables. Solving the model based on shortest path contiguity (SPC) constraints via standard branch and bound attains speedups in computational time of up to $17$x relative to the cut set-based B\&C implementation. In addition, the SPC constraints are demonstrated to be supervalid inequalities of the edge-based $p$-median (E$p$M) model (i.e., for which contiguity is not explicitly required), meaning that they may cut off integer-feasible solutions and some, but not all, of the optimal solutions of this simpler problem. Finally, the paper explores structural insights and connections between EC$p$M and the edge-based districting (EBD) problem, which enforces an additional work balance criterion. An existing model that utilizes cut set-based contiguity constraints was unable to find a feasible solution within $12$ hours for any of the tested instances, while an SPC-based EBD model was able to solve most of these to optimality. \\
	\end{abstract}
			
	\noindent%
	{\it Keywords:} location analysis; $p$-median; logistics districting; graph partitioning.

	\spacingset{1.5} 

\section{Introduction}
\label{s:intro}

Over the last few years, there has been a global acceleration in the transition to the e-commerce business model. This has led to increased interest in last-mile freight logistics, that is, the final leg of a delivery service in a consumer supply chain network where the consignment is sent directly to the consumer or to collection points from a transportation hub \citep{bosona2020urban}. The last mile is the least efficient part of the supply chain, with costs contributing up to 28\% of the total transportation cost \citep{bergmann2020integrating}. Moreover, last-mile logistics can contribute up to 30\% of vehicle miles and up to 50\% of greenhouse gas (GHG) emissions in developed urban areas \citep{dablanc2010impacts}. Prior studies show that proximity from warehouses to service areas is a dominant factor in operational and environmental costs \citep{wygonik2018urban}. In addition, the consolidation of shipments to customers along road segments and routes can help attain economies of density. This, in turn, enables the utilization of larger and more cost-effective vehicles, helping distribute fixed costs among more units and lowering variable costs, reducing the unit shipment cost \citep{alumur2021perspectives}. 

Discrete location-allocation models provide a systematic  framework for prioritizing such key considerations in the planning of service areas onto which last-miles logistics are to be subsequently deployed. These models determine simultaneously how to locate a set of facilities and/or services (e.g., warehouses, schools, hospitals) and  allocate customers or communities to the chosen locations according to one or several decision criteria \citep{Marianov2002}. They are usually defined on graphs, which can represent various types of territories --- road networks, air transport networks, maritime networks, etc. \citep{Tansel1983} --- and have been adapted for different types of distance functions \citep{Brimberg2008}.  Discrete location-allocation models that employ discrete (i.e., network) distances are especially suited to reflect practical aspects of road travel encountered in last-miles logistics \citep{kalcsics2019districting}. A quintessential model with these characteristics is the $p$-median, which was first introduced in \cite{Hakimi1964}. The model selects $p$ facilities from among a discrete set of candidate locations---most commonly represented as the nodes of the territory graph---and allocates customers to these facilities, with the objective of minimizing the sum of the weighted distances from each facility to its allocated customers \citep{Hakimi1964}. 

Prior research has introduced numerous models and algorithms for the $p$-median, an NP-hard problem \citep{hakimi1979algorithmic}. Thorough reviews from the facility location literature are available in \cite{owen1998strategic}, \cite{melo2009facility}, and \cite{ulukan2015survey}; textbook treatments include \cite{drezner2004facility} and \cite{Daskin2013}. Similar to \cite{Hakimi1964}, many works employ network distances (or a function thereof) between the facilities and customers as a proxy for operational costs. To reflect various practical considerations, the $p$-median has been extended to impose additional restrictions. For instance, the capacitated $p$-median problem introduces limits on the amount of demand that each facility can serve \citep{ceselli2005branch}. Another example is the dynamic $p$-median problem, which allows facilities to be opened, closed, relocated, or mobilized over a planning horizon, optimizing both location and timing decisions to minimize the total costs of satisfying the demand of allocated customers \citep{guden2019dynamic}.

While various practical considerations have been incorporated into $p$-median models, other essential aspects have not been widely explored. The related literature tends to concentrate on node-based partitioning models, meaning that the customers (and facilities) are assumed to be situated on the vertices of the underlying territory graph (i.e., street intersections). Alternative partitioning models for reflecting the locations of residential customers on the graph's edges (i.e., along the streets), which is relevant to various logistics contexts \citep{Kalcsics2015}, have received significantly less attention. While the related edge-based graph partitioning problem has been recently studied in relation to computer science applications, e.g., distributed graph pre-processing on large graphs, characteristics of the models developed to address it are not tailored to logistics applications. Moreover, the computational techniques developed for  edge-based graph partitioning have focused on heuristic algorithms  \citep{zhang2017graph,mayer2018adwise,mayer2021hybrid}. Our work seeks to probe deeper into the study of edge-based location-allocation models and their solution techniques by considering the \textit{edge-based $p$-median} problem and key extensions thereof. To this end, we introduce binary programming models and exact algorithms, and we analyze their theoretical and computational implications.  

Another practical consideration that is rarely explicitly enforced in $p$-median models is contiguity, even though having a direct path from every facility to its allocated customers is inherently beneficial. An exception is the node-based $p$-median extension introduced by \cite{lolonis1993location}, which imposes this criterion via a set of non-linear expressions. The resulting model is incompatible with most optimization solvers; hence, the authors introduce a heuristic approach to solve it. Another exception is the connected $p$-median problem, which requires only the selected medians (i.e., facilities) to form a connected subgraph \citep{yen2010connected}. Our work introduces different sets of linear constraints for imposing a more extensive type of contiguity that requires the selected center of each territory and its allocated edges to be connected. It is  relevant to add that, although this type of contiguity is commonly enforced in other spatial optimization models \citep{williams2002zero,carvajal2013imposing,SalazarAguilar2011,GarciaAyala2016}, it has not been explicitly incorporated in more elementary location-allocation models. Addressing this research gap is important because certain design criteria associated with these more restrictive models can conflict with the basic requirements of $p$-median models. For example, the requirement that each facility must serve approximately the same amount of customers or demand, which is commonplace in logistics districting (see below), can degrade compactness of the service areas \citep{GarciaAyala2016}.

Certain extensions of the $p$-median problem implicitly yield contiguous territories, but they come with excessive computational costs relative to planning purposes. In particular, the Hamiltonian $p$-median (H$p$M) problem is a location-routing problem that combines the $p$-median problem and the traveling salesperson problem \citep{branco1990hamiltonian}. H$p$M simultaneously selects $p$ depots, allocates customers to one of the depots, and determines routes for serving customers allocated to the same depot, with the objective of minimizing total distance traveled. It is related to but more challenging than the generalized vehicle routing problem, in which the depots are predetermined \citep{gollowitzer2011new}. The inclusion of subtour elimination constraints to determine each route in H$p$M effectively imposes contiguity, by enforcing each vehicle to complete a tour that visits each of its assigned customers exactly once during a single trip that starts and ends at the depot. Hence, it is a more restrictive form of contiguity than requiring simply the customers served by each depot to be connected. \textit{Logistics districting} represents an important class of location-allocation models that also build on the $p$-median, imposing the added requirements of contiguity and work balance, and  exacerbate its computational difficulties. Before proceeding, it is important to distinguish these models from those developed in political districting, whose study has spanned several decades (e.g., \cite{hess1965nonpartisan,mehrotra1998optimization,ricca2011political,validi2022imposing}). While their basic design criteria---compactness, contiguity, and work balance (see Section \ref{background})---are conceptually similar, the two types of models exhibit substantive differences including the basic units to be partitioned, distance functions employed, and additional design criteria that are unique to each context.

This paper makes several contributions to territorial design. First, it introduces the edge-based contiguous $p$-median (EC$p$M) problem and two exact binary formulations to solve it. The formulations partition a road network into a fixed number of compact and contiguous territories by locating centers and allocating roads to these centers, with the objective of minimizing the total distance within each territory. The two proposed formulations differ in how they enforce contiguity: the first requires an exponential number of cut set constraints, while the second utilizes a polynomial number of shortest-path constraints. As a second contribution, this paper introduces a separation algorithm that generates only a small number of cut set constraints to solve the first model, namely, a branch-and-cut (B\&C) algorithm. The second model is solved using off-the-shelf methods (i.e., branch-and-bound). As a third contribution, the paper derives three logically equivalent sets of constraints for enforcing shortest-path contiguity (SPC) and compares them using polyhedral techniques. The SPC constraints are also demonstrated  to be \textit{supervalid inequalities} for the simpler edge-based $p$-median problem (E$p$M), meaning that they may cut off integer-feasible solutions and some, but not all, of the optimal solutions for this simpler problem. As a fourth contribution, the paper carries out experiments to test the computational impacts of applying the SPC-based model on road networks with over $2,700$ nodes and close to $3,400$ edges. As a final contribution, the paper explores theoretical and computational implications of the proposed $p$-median models on edge-based districting (EBD).

The remainder of the paper is structured as follows. Section \ref{background} provides the background for this paper and a brief overview of related works. Section \ref{sec:1} introduces the E$p$M and  EC$p$M problems and binary programming formulations to solve them. It also derives additional theoretical insights regarding the SPC constraints introduced for EC$p$M and performs computational studies to test the featured models and solution schemes. Section \ref{sec:2} explores the connections between EC$p$M and EBD, and it tests the additional computational benefits of the proposed SPC-based models. Finally, Section \ref{Conclusion} concludes the paper.

\llV\llV\llV
\section{Background}
\label{background}

Location-allocation models are designed to create territories that satisfy desirable criteria. A territory is considered to be \textit{compact} if its shape is nearly circular or approximately square-shaped, non-distorted, without holes, and with smooth boundaries \citep{Butsch2014}. By prioritizing compactness,  location-allocation models can help improve the subsequent road-based routing for e-commerce companies, as compact territories lead to shorter travel distances \citep{GarciaAyala2016}. Another design criterion that can improve last-mile logistics is contiguity. A territory is considered to be \textit{contiguous} if it is possible to travel between any two points within a territory without leaving the territory. Contiguity can enable e-commerce companies to aggregate different customer orders placed in close proximity to each other and eventually deploy their routing algorithms within each non-overlapping territory \citep{alvarez2021districting}. Contiguity has been studied in spatial optimization, and it is among the most difficult criteria to enforce in these contexts \citep{ricca2008local}. This criterion can be implicitly prioritized by optimizing compactness; however, compact territories need not be contiguous \citep{Shirabe2009}. 

There have been several attempts to impose contiguity across different applications including political districting \citep{garfinkel1970optimal,validi2022imposing}, land acquisition \citep{williams2002zero}, forest planning \citep{carvajal2013imposing}, and territorial districting \citep{SalazarAguilar2011,GarciaAyala2016}. A natural approach is to leverage similar restrictions from other optimization models such as subtour elimination constraints from the traveling salesperson problem (TSP) \citep{dantzig1954solution}---which prevent  solutions that induce cycles with fewer arcs than the total number of cities to be visited. Subtour elimination constraints can be imposed through \textit{cut set} logic, which requires that a feasible solution must include at least one arc between every strict subset of cities and its complement. Variations of this concept have been devised for node-based \citep{SalazarAguilar2011} and edge-based \citep{GarciaAyala2016} districting models. Because the inclusion of the full constraint set, which has exponential cardinality, is generally computationally prohibitive, these models cannot be implemented with off-the-shelf methods (i.e., branch-and-bound) and, consequently, they tend to be paired with separation schemes.

There are three main types of separation schemes for dealing with contiguity: iterative branch-and-bound with a cut generation scheme (B\&B\&Cut), integer separation, and fractional separation. B\&B\&Cut is an iterative exact algorithm that solves a relaxed version of the original model without contiguity constraints using branch-and-bound (B\&B)  \citep{SalazarAguilar2011}. Then, it uses a search algorithm---e.g., breadth first search or depth first search---to check for disconnected territories, adds the needed cuts, and re-solves the new model using B\&B. At each iteration, the search algorithm either finds some violated contiguity constraints, which are added to the model, or it concludes that all territories are contiguous, terminating the solution algorithm. On the other hand, integer separation solves the relaxed version of the original model without the contiguity constraints, and then it deploys a cutting plane algorithm within the B\&B exploration for every incumbent integer solution. That is, once an integer solution is obtained, the algorithm checks if there are violated contiguity constraints using a search algorithm, and it adds the needed cuts on-the-go (e.g., as lazy constraints) without having to restart B\&B every time a violated contiguity constraint is found  \citep{mendes2022capacitated}. The implementation of integer separation requires more sophisticated  knowledge of the optimization solver, for example, to issue callbacks for adding the violated constraints. Finally, fractional separation deploys a cutting plane algorithm within B\&B exploration for every fractional solution. This strategy relies on complex and/or computationally expensive algorithms (e.g., multiple calls to the minimum cut algorithm) \citep{validi2022imposing}. Therefore, prior works have implemented primarily B\&B\&Cut or integer separation \citep{SalazarAguilar2011,GarciaAyala2016,mendes2022capacitated,validi2022imposing}. Branch-and-cut algorithms tend to align with the second and third types of separation scheme \citep{martin2001general}.

Previous research has also introduced compact constraints sets (i.e., of polynomial cardinality) for imposing contiguity. These are divided into two main types: network flow-based \citep{Shirabe2009,jafari2013new,Wang2021} and neighbor-based \citep{mehrotra1998optimization,onal2016optimal, farughi2020novel}. With the former type, the undirected planar graph is converted into a bidirected graph by replacing each undirected edge with two arcs (one for each direction). Contiguity is achieved by ensuring flow between nodes occupying the same territory. Models equipped with these constraints tend to perform poorly computationally, except on small problem instances \citep{Wang2021, validi2022imposing}. Within each territory, different network flow configurations can represent the same allocation solution, causing a high level of symmetry. Thus, these models are usually augmented with symmetry-reducing cutting planes \citep{Wang2021,mendes2022capacitated}. An alternative type of compact contiguity constraints considers logic on neighbors, specifically, if a node is allocated to a certain center node, then at least one of the neighboring nodes must be allocated to the same center node. Propagating this requirement generates a chain of mutually adjacent nodes, forming a path between the center node and any node assigned to it within each territory.  Such constraints can be regarded as a special application of dominance constraints developed for the simple plant location problem \citep{canovas2007strengthened}. To the best of our knowledge, the comparative performance of these compact contiguity constraints vis-\'a-vis cut set-based constraints has not been directly evaluated.
\llV\llV\llV
\section{Exact Formulations}
\label{sec:1}

In this section, we define the EC$p$M problem and present a binary programming model with cut set-based contiguity constraints for solving it. The latter constraints are exponential in number, and their full specification requires prohibitive computational resources. Therefore, we complement this model with a branch-and-cut (B\&C) algorithm that tends to generate only a small subset of these constraints on real-world instances. We also propose an alternative model that incorporates shortest-path contiguity (SPC) constraints, which are polynomial in number. In addition, we demonstrate that the SPC constraints represent supervalid inequalities of E$p$M (i.e., EC$p$M without the contiguity requirement).
\llV\llV
\subsection{The Edge-based $p$-median (E$p$M) Model}
Prior to introducing the EC$p$M problem, we introduce the E$p$M problem, which aims to determine the optimal location of $p$ center nodes and allocation of edges to them. The underlying road network is represented as an undirected planar graph $G = (V,E)$, which is assumed to be connected. To proceed with the mathematical formulation of E$p$M, let $l_{(m,n)}$ be the length of edge $(m,n) \in E$ and consider the following network distance concepts.  

\begin{definition}
Let $SP_{i,j}$ be the set of edges that make up the shortest path from node $i$ to node $j$. This parameter is expressed mathematically as 
\begin{align}
\lllV
SP_{i,j} = \mathop{\mathrm{arg\,min}}_{\rho \in \mathcal{P}_{i,j}}
\left\{ \sum_{(m,n) \in \rho} \ell_{(m,n)} \right\} \nonumber
\end{align}
\llV
where $\mathcal{P}_{i,j}$ is the set of all possible paths between nodes $i,j \in V$.
\label{Shortest_Path_Definition} 
\end{definition}

\begin{definition} 

Let $SP_{i,(j,k)}$ be the set of edges that make up the shortest path from node $i$ to edge $(j,k)$; it is given by either $SP_{i,j}$ or $SP_{i,k}$, depending on which of $j$ or $k$ is the tail node in the path. This set is expressed mathematically as
\llV 
\begin{align}
SP_{i,(j,k)} =
\mathop{\mathrm{arg\,min}}_{\rho \in \{\,SP_{i,j},\,SP_{i,k}\,\}}
\;\sum_{(m,n)\in \rho} \ell_{(m,n)} \nonumber
\end{align}
\llV
\label{def_2}
\end{definition}
\bla 
\begin{definition} 
Let ${d_{i,\left(j,k\right)}}$ be a network distance equal to the sum of positive edge lengths that make up the shortest path from node $i$ to edge $(j,k)$. This set is expressed mathematically as 
\llV
\begin{align}
{d_{i,\left(j,k\right)}} = \sum_{(m,n) \in SP_{i,(j,k)}} l_{(m,n)},  \nonumber
\end{align}
\llV\llV
\label{def_1}\end{definition}
The shortest path between every pair of nodes in $G$ can be computed efficiently using Dijkstra's algorithm \citep{Ahuja1988}. Based on Definitions \ref{Shortest_Path_Definition} and \ref{def_2}, this result extends to the shortest path between every node and edge, which must be computed to construct the E$p$M model. The model components are described next.\\
\noindent \emph{Sets and Indices}:
\begin{labeling}{$g\in G$\hspace{20pt}}
	\item[$i \in V$] Nodes in the network (corresponding to street crossings, dead ends, etc.).
	\item[$(j,k) \in E$] Edges in the network (corresponding to roads, streets, etc.).
	
\end{labeling}

\noindent \emph{Parameters}:
\begin{labeling}{$\omega^{\max}_{ij}$\hspace{25pt}}
	\item[${d_{i,\left(j,k\right)}}$]   Distance from node $i$ to edge $(j,k)$ (see Definition \ref{def_1}).                  
	\item[$p$]    Number of territories.

\end{labeling}
\noindent \emph{Decision Variables}:\\
\begin{labeling}{$P_{g}^{\max}, P_{g}^{\min}$\hspace{1pt}}
	\item[${w}_i$] $= \begin{cases}
	1 & \text{if node $i$ is selected as a center node}, \\
	0 & \text{otherwise}.
	\end{cases}$
	\\
	\item[$x_{i,\left(j,k\right)}$]  $=\begin{cases}
	1 & \text{if edge $\left(j,k\right)$ is allocated to center node $i$} , \\
	0 & \text{otherwise}.
	\end{cases}$
	\\
\end{labeling}

\lllV

\begin{flalign}
    &\hspace{0pt}\displaystyle  
    \text{(EPM)}
    && \min \sum_{i \in V} \sum_{\left(j,k\right) \in E} {d_{i,\left(j,k\right)}} {x_{i,\left(j,k\right)}} & \label{obj}
\end{flalign}
\lllV
\begin{align}
    &&\text{s.t.}
    && \sum_{i \in V} {x_{i,\left(j,k\right)}}= 1  &&\forall \left(j,k\right) \in E \label{const:110}\\
    &&&&\hspace{0pt} \displaystyle \sum_{i \in V} {w_{i}}=p \label{const:111}\\
    &&&&\hspace{0pt}\displaystyle {x_{i,\left(j,k\right)}} \leq w_{i} &&\forall i \in V, \forall (j,k) \in E &\label{const:1121}\\
    &&&&\hspace{0pt}\displaystyle {x_{i,\left(j,k\right)}} \in \mathbb{B} &&\forall i \in V, \forall \left(j,k\right) \in E \label{const:10}\\
    &&&&\hspace{0pt}\displaystyle {w_i} \in \mathbb{B}  &&\forall i \in V\label{const:11}
\end{align}

Objective Function \eqref{obj} maximizes compactness by minimizing the dispersion from each center node $i$ to its allocated edges. Constraint \eqref{const:110} ensures that each edge $(j,k) \in E$ is allocated to exactly one center node. Constraint \eqref{const:111} ensures that exactly $p$ center nodes are selected. Constraint \eqref{const:1121} allows edge $(j,k)$ to be allocated to node $i$ only if it is selected to be a center node. Constraints \eqref{const:10} and \eqref{const:11} specify the domain of the variables. 

\begin{figure}[!h]
    \centering
    \begin{subfigure}[b]{0.45\textwidth}
        \scalebox{0.75}{ 
            \begin{tikzpicture}
                \Vertex[x=0,y=0,label=1,color=white]{A}
                \Vertex[x=0,y=2,label=2,color=white]{B}
                \Vertex[x=-2,y=0,label=3,color=white]{C}
                \Vertex[x=2,y=0,label=4,color=white]{D}
                \Vertex[x=0,y=-2,label=5,color=white]{E}
                \Vertex[x=-2,y=-2,label=6,color=white]{F}
                \Vertex[x=-2,y=2,label=7,color=white]{G}
                \Vertex[x=2,y=2,label=8,color=white]{H}
                \Vertex[x=-2,y=3.5,label=9,color=white]{I}
                \Edge[lw=2pt,label=5,position=left](A)(B)
                \Edge[lw=2pt,label=5,position=above](A)(C)
                \Edge[lw=2pt,label=5,position=above](A)(D)
                \Edge[lw=2pt,label=5,position=right](A)(E)
                \Edge[lw=2pt,label=5,position=left](C)(F)
                \Edge[lw=2pt,label=5,position=above](E)(F)
                \Edge[lw=2pt,label=5,position=left](C)(G)
                \Edge[lw=2pt,label=5,position=above](B)(G)
                \Edge[lw=2pt,label=5,position=above](B)(H)
                \Edge[lw=2pt,label=5,position=right](H)(D)
                \Edge[lw=2pt,label=2,position=left](G)(I)
            \end{tikzpicture}
        }
        \caption{Road Network to be Partitioned into Two Compact and Contiguous Territories with E$p$M.} \label{Fig:mot1}
    \end{subfigure}\hfill
    \begin{subfigure}[b]{0.35\textwidth}
        \scalebox{0.75}{
            \begin{tikzpicture}
                \Vertex[x=0,y=0,label=1,color=white,style=dotted]{A}
                \Vertex[x=0,y=2,label=2,color=white]{B}
                \Vertex[x=-2,y=0,label=\textcolor{white}{3},color=black]{C}
                \Vertex[x=2,y=0,label=4,color=white]{D}
                \Vertex[x=0,y=-2,label=5,color=white]{E}
                \Vertex[x=-2,y=-2,label=6,color=white]{F}
                \Vertex[x=-2,y=2,label=7,color=white]{G}
                \Vertex[x=2,y=2,label=8,color=white]{H}
                \Vertex[x=-2,y=3.5,label=9,color=white]{I}
                \Edge[lw=2pt,style=dotted](A)(B)
                \Edge[lw=2pt](A)(C)
                \Edge[lw=2pt,style=dotted](A)(D)
                \Edge[lw=2pt,style=dotted](A)(E)
                \Edge[lw=2pt](C)(F)
                \Edge[lw=2pt](E)(F)
                \Edge[lw=2pt,style=dotted](C)(G)
                \Edge[lw=2pt](B)(G)
                \Edge[lw=2pt,style=dotted](B)(H)
                \Edge[lw=2pt,style=dotted](H)(D)
                \Edge[lw=2pt,style=dotted](G)(I)
            \end{tikzpicture}
        }
        \caption{Non-contiguous Solution\\ (Dispersion $=35$).} \label{Fig:mot2}
    \end{subfigure}\hfill
    \begin{subfigure}[b]{0.35\textwidth}
        \scalebox{0.75}{
            \begin{tikzpicture}
                \Vertex[x=0,y=0,label=1,color=white,style=dotted]{A}
                \Vertex[x=0,y=2,label=2,color=white]{B}
                \Vertex[x=-2,y=0,label=\textcolor{white}{3},color=black]{C}
                \Vertex[x=2,y=0,label=4,color=white]{D}
                \Vertex[x=0,y=-2,label=5,color=white]{E}
                \Vertex[x=-2,y=-2,label=6,color=white]{F}
                \Vertex[x=-2,y=2,label=7,color=white]{G}
                \Vertex[x=2,y=2,label=8,color=white]{H}
                \Vertex[x=-2,y=3.5,label=9,color=white]{I}
                \Edge[lw=2pt,style=dotted](A)(B)
                \Edge[lw=2pt](A)(C)
                \Edge[lw=2pt,style=dotted](A)(D)
                \Edge[lw=2pt,style=dotted](A)(E)
                \Edge[lw=2pt](C)(F)
                \Edge[lw=2pt](E)(F)
                \Edge[lw=2pt,style=dotted](C)(G)
                \Edge[lw=2pt,style=dotted](B)(G)
                \Edge[lw=2pt,style=dotted](B)(H)
                \Edge[lw=2pt,style=dotted](H)(D)
                \Edge[lw=2pt,style=dotted](G)(I)
                \coordinate (dummy1) at (5,0);
                \coordinate (dummy2) at (5.5,0);
                \draw[black,line width=2pt] (dummy1) -- (dummy2) node[pos=1, right, font=\small, black] {First Territory};
                \coordinate (dummy3) at (5,-1);
                \coordinate (dummy4) at (5.5,-1);
                \draw[dotted, black,line width=2pt] (dummy3) -- (dummy4) node[pos=1, right, font=\small, black] {Second Territory};
            \end{tikzpicture}
        }
        \caption{Contiguous Solution\\ (Dispersion $=35$).} \label{Fig:mot3}
    \end{subfigure}
    \caption{E$p$M Solutions with Identical Dispersions (i.e., Objective Function Values).}
    \label{Fig:mot}
\end{figure}

Next, Example \ref{expmot} and Figure \ref{Fig:mot} illustrate that model \eqref{obj}-\eqref{const:11} can return solutions corresponding to non-contiguous territories. Then, the ensuing subsection presents two binary programming models for extending E$p$M to exclude such outcomes; the models differ in how they impose  contiguity.

\begin{exmp}\label{expmot}
Consider the connected planar graph $G=(V,E)$ depicted in Figure \ref{Fig:mot1}, where $|V|=9$, $|E|=11$, and $p=2$. Figures \ref{Fig:mot2} and Figure \ref{Fig:mot3} display two E$p$M solutions. In each case, solid edges correspond to the first territory, whereas dotted edges correspond to the second territory; the centers of the two territories follow the same formatting scheme as the edges. In both of solutions, nodes $1$ and $3$ are the centers of the first and second district, respectively. The sole difference is the allocation of edge $(2,7)$, which is allocated to the first territory in Figure \ref{Fig:mot2} and to the second territory in Figure \ref{Fig:mot3}. The two solutions have the same dispersion ($35$),  but only the latter is contiguous.
\end{exmp}

\llV\llV
\subsection{The Edge-based Contiguous $p$-median Model with Cut Set Contiguity Constraints (EC$p$M-CSC)} 
\label{EPMC_Model}

As an extension of the E$p$M problem, this subsection presents a binary programming formulation to partition $E$ into $p$ compact and contiguous territories.  The formulation imposes contiguity by forcing every cut set that separates any pair of nonadjacent edges allocated to a certain territory to allocate at least one edge from the cut set to the same territory. Cut set contiguity constraints, which are exponential in number, are analogous to  subtour elimination constraints of the TSP (see Section \ref{background}). They were first devised for node-based districting by \cite{drexl1999fast}, who also explain that it is usually possible to solve real-world instances by imposing only a small number of these constraints. The constraints are adapted for edge-based districting in \cite{GarciaAyala2016}; however, this model assumes that the centers are predetermined (i.e., it is an allocation-only model).
 
\noindent \emph{Sets and Indices}:
\begin{labeling}{$g\in G$\hspace{20pt}}
	\item[$S \subset E$]     A subset of edges. 
	\item[$V(S) \subset V$]  Set of nodes that are incident to any edge in $S$.
	\item[$\sigma(S)$]  Cut set of $S$  (i.e., $\sigma(S):= \left\lbrace\left(i,j\right) \in E| i \in V\left(S\right), j \in V \setminus V\left(S\right) \right\rbrace$).
	
\end{labeling} 
\lllV

\begin{flalign}
&\hspace{0pt}\displaystyle  
\text{(EC$p$M-CSC)}
&& \min \sum_{i \in V} \sum_{\left(j,k\right) \in E} {d_{i,\left(j,k\right)}} {x_{i,\left(j,k\right)}}  &\tag{\ref{obj}}
\end{flalign}
\begin{flalign}
&&\text{s.t.} 
&& \eqref{const:110}-\eqref{const:11} 
\nonumber \\
&&&&\hspace{0pt} \mkern-18mu\displaystyle \sum_{s \in \sigma(S)}\! {x_{is}}\! - \sum_{s \in S}\! {x_{is}}\! \geq {x_{i,(j,k)}}\! - |S| \text{ } &\forall i \in V,\forall (j,k)\! \in E, S\! \subset E\! \setminus\! \sigma(\left\lbrace (j,k)\!\right\rbrace ) \label{ECPM_2}
\end{flalign}
\lV
The main difference from E$p$M is the addition of Constraints \eqref{ECPM_2}, which ensure that any edge $(j,k)$ allocated to center node $i$ should be adjacent to other edges allocated to the same center node. To elaborate, let $S \subset E \setminus \sigma(\left\lbrace (j,k)\right\rbrace)$ be any subset of edges not adjacent to $(j,k)$, where $\sigma(\left\lbrace (j,k)\right\rbrace)$ is the set of adjacent edges to $(j,k)$. If edge $(j,k)$ is not allocated to center node $i$ (that is, $x_{i,(j,k)}=0$), the constraint is always satisfied. Additionally, if there is one or more edges in $S$ not allocated to center node $i$, then the value of second term is strictly less than $|S|$, and the constraint is also always satisfied. The constraint is activated only when all edges in $S$ are allocated to center node $i$ and edge $(j,k)$ is allocated to center node $i$ ($x_{i,(j,k)}=1$). It imposes this logic by requiring that at least one edge in the cut set of set $S$ must be allocated to center node $i$. Applying these constraints for every possible subset $S$ ensures that all edges within each service area are connected. 
\llV\llV
\subsubsection{Branch-and-Cut Algorithm}
There is an exponential number of Constraints \eqref{ECPM_2}, making their full implementation impractical. Next, we pair EC$p$M-CSC with an integer separation scheme, namely a branch-and-cut algorithm (B\&C). The algorithm, which is adapted from  \cite{mendes2022capacitated}, generates only those cut set contiguity constraints that are deemed necessary based on the solution to a relaxed version of EC$p$M. These constraints  are determined based on the output of a separation algorithm. Note that, in the B\&C implementation, the cuts corresponding to \eqref{ECPM_2} are added on-the-go as \textit{lazy constraints}, thereby avoiding having to restart B\&B to remove point $\mathbf{x^{*}}$ from the current relaxation.

The procedure begins by solving a relaxed version of the original model in which all contiguity constraints are omitted. This yields an initial solution, which may contain violations in contiguity. In each iteration, the current relaxation of EC$p$M-CSC is solved to obtain an incumbent binary solution $\mathbf{x^{*}}$. The solution is then analyzed to detect contiguity violations. Let $G[E_k]=(V_k,E_k)$ denote the subgraph of $G$ induced by the set of edges $E_k$ assigned to territory $k$, where $V_k := \left\lbrace u,v \in V \mid (u,v) \in E_k \right\rbrace$. A graph traversal algorithm such as breadth-first search (BFS) is used to determine the connected components in each $G[E_k]$. Contiguity is violated when  multiple connected components exist within a territory.

For each of the $p$ territories identified in the solution, the method inspects whether the corresponding subgraph $G[E_k]$ is connected. If not, a set $R$ of violated contiguity constraints is generated. For every pair of connected components $S_k', S_k''$ in $E_k$, the procedure examines all edges $(l,m) \in S_k'$ that are disconnected from $S_k''$ and adds a constraint requiring that at least one edge in the cut set of $S_k''$ must be assigned to the same territory as $(l,m)$. The same is repeated for each edge $(u,v) \in S_k''$ with respect to $S_k'$. This iterative process continues until a solution satisfying all contiguity constraints is found.
 
\llV\llV\llV
\subsection{The Edge-based Contiguous $p$-median Model with Shortest-path Contiguity Constraints (EC$p$M-SPC)}
\label{derive}
    An alternative approach to model contiguity used in spatial optimization applications is to leverage shortest paths. \cite{zoltners1983sales} introduce the concept of hierarchical adjacency trees for the sales territory alignment problem with predetermined centers. For each predetermined center node, a set of shortest paths is determined from the center to all other nodes, with the predecessor-successor along each shortest path recorded. Contiguity is enforced by ensuring that no node can be allocated to a center node unless one or more of its immediate preceding nodes along a shortest path is also allocated to the same center node. \cite{cova2000contiguity} derive shortest-path contiguity-$n$ constraints for single-region site search problems. These constraints allow each spatial unit to be reached from the root along only one of its $n$-shortest paths; to that end, it is necessary to compute the shortest, the second-shortest, and so on, up to the $n^{\text{th}}$-shortest path from each spatial unit to the root, where $n$ is assumed to be reasonably small. \cite{mehrotra1998optimization} develop and implement a special version of shortest-path contiguity-$n$ constraints for political districting where only the first shortest path is considered. More recently, \cite{farughi2020novel} introduce a distinct set of shortest-path contiguity constraints for designing health care districts. The constraints ensure that if nodes $j$ and $k$ are allocated to center node $i$, then all nodes along $SP_{j,k}$ must also be assigned to center node $i$. A common feature of  these shortest-path constraints is that they are defined exclusively for node-based territorial planning models. An exception is \cite{cova2000contiguity}, where the spatial unit is a raster, which  nevertheless can be readily represented as a node.

To the best of our knowledge, shortest-path contiguity constraints---henceforth referred to as SPC constraints, for short---have not been defined for edge-based location-allocation models. This work seeks to fill this research gap. The main intuition behind the proposed constraints is that, for any connected planar graph with more than one edge, if edge $(j,k)$ is allocated to center node $i$, then so must all edges along the shortest path that joins them. Since $SP_{i,(j,k)}$ is an uninterrupted path, requiring its edges to be allocated to center node $i$ when edge $(j,k)$ is allocated to $i$ is tantamount to enforcing contiguity within each territory. 

It is possible to model this requirement in at least three logically equivalent ways, denoted as  constraints SPC-1, SPC-2, and SPC-3; their expressions are given by
\llV
\small
\begin{flalign*}
\text{(SPC-1)                          }
     &|SP_{i,(j,k)}| x_{i,\left(j,k\right)}\leq \sum_{\left(l,m\right) \in SP_{i, (j,k)}} x_{i,\left(l,m\right)} \text{      } &&\forall i \in V, \forall \left(j,k\right) \in E  \tag{$8^{\prime}$}\label{eqn:ineq_new}\\
	 \text{(SPC-2)                   } 
	& x_{i,\left(j,k\right)}\leq  x_{i,\left(l,m\right)} &&\forall i \in V, \forall \left(j,k\right) \in E, \forall (l,m) \in SP_{i, (j,k)} \tag{$8^{\prime\prime}$} \label{eqn:ineq_new_1}\\
	 \text{(SPC-3)                }
	& x_{i,\left(j,k\right)}\leq  x_{i,\left(l,m\right)}  &&\forall i \in V, \forall \left(j,k\right) \in E, (l,m) \in SP_{i, (j,k)}\cap \sigma(\left\lbrace (j,k)\right\rbrace) \tag{$8^{\prime\prime\prime}$} \label{eqn:ineq_new_2}
 \end{flalign*}\normalsize
 \llV
     where $\sigma(\left\lbrace (j,k)\right\rbrace)$ is the cut set of edge $(j,k)$ (i.e., the set of its adjacent edges). Note that in \eqref{eqn:ineq_new_2}$, (l,m)$ is the preceding edge to $(j,k)$ on the shortest path from $(j,k)$ to node $i$, because it belongs to both $SP_{i,(j,k)}$ and  $\sigma(\left\lbrace (j,k)\right\rbrace)$. 
     
     Next, we establish that constraints sets SPC-1, SPC-2, and SPC-3 are logically equivalent, and afterward we demonstrate which of the constraint sets induces the tightest formulation. Due to length restrictions, the proofs of Theorems \ref{thm:tight}, \ref{thm:tighter}, and \ref{thm:3} presented in this section can be found in the Online Supplement (see Sections A, B, and C, respectively).\mmV  

    \begin{theorem}
     For any instance of EC$p$M, constraints SPC-1, SPC-2, and SPC-3  are logically equivalent.  
    \label{logically}
    \end{theorem}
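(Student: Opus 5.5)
We show that the three constraint sets cut out the same set of binary points. Since all variables are binary (constraints \eqref{const:10}), it suffices to prove that, for fixed $i\in V$ and $(j,k)\in E$, each of SPC-1, SPC-2 and SPC-3 is equivalent to the implication
\[
x_{i,(j,k)}=1 \;\Longrightarrow\; x_{i,(l,m)}=1 \quad \text{for all } (l,m)\in SP_{i,(j,k)}.
\]
Call this implication $(\star)$. The plan has three steps: SPC-1 $\Leftrightarrow(\star)$, SPC-2 $\Leftrightarrow(\star)$, and SPC-3 $\Leftrightarrow$ SPC-2.

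\emph{SPC-1 and SPC-2.} For SPC-1, split on the value of $x_{i,(j,k)}$.
\begin{itemize}
\item If $x_{i,(j,k)}=0$, the left-hand side is $0$ and the right-hand side is nonnegative, so the constraint holds trivially.
\item If $x_{i,(j,k)}=1$, the constraint reads $|SP_{i,(j,k)}|\le\sum_{(l,m)\in SP_{i,(j,k)}}x_{i,(l,m)}$. A sum of $|SP_{i,(j,k)}|$ binary terms reaches this value exactly when every term equals one.
\end{itemize}
SPC-2 is, term by term, the same implication $(\star)$. Hence SPC-1 $\Leftrightarrow$ $(\star)$ $\Leftrightarrow$ SPC-2 on binary points. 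The case $SP_{i,(j,k)}=\emptyset$, i.e., $i\in\{j,k\}$, is vacuous in all three sets.

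\emph{SPC-2 implies SPC-3.} The index set of SPC-3 is a subset of the index set of SPC-2, so this direction is immediate.

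\emph{SPC-3 implies SPC-2.} This is the main step, and it requires an induction along shortest paths. The key structural fact is a consistency property of the shortest paths. Write $SP_{i,(j,k)}$ as the path $i=v_0,v_1,\dots,v_r$ with $v_r\in\{j,k\}$. For each edge $e_t=(v_{t-1},v_t)$ on this path, we need:
\begin{enumerate}
\item the predecessor edge $e_{t-1}$ is the unique edge of $SP_{i,e_t}$ adjacent to $e_t$;
\item $SP_{i,e_t}=\{e_1,\dots,e_{t-1}\}$, i.e., the shortest path from $i$ to $e_t$ is the prefix of the same path.
\end{enumerate}
Given this, we proceed by induction on the position $t$ counted backward from $(j,k)$. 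Suppose $x_{i,(j,k)}=1$. SPC-3 applied to $(j,k)$ gives $x_{i,e_r}=1$. SPC-3 applied to $e_r$, whose preceding edge is $e_{r-1}$, gives $x_{i,e_{r-1}}=1$. Continuing down to $e_1$ yields $(\star)$, and therefore SPC-2.

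\emph{Main obstacle: establishing the consistency property.} Subpaths of shortest paths are shortest paths, because edge lengths are positive. Hence the prefix $v_0,\dots,v_{t-1}$ is a shortest $i$--$v_{t-1}$ path, and $d_{i,e_t}=d_{i,v_{t-1}}$, since $v_{t-1}$ is strictly closer to $i$ than $v_t$. The difficulty is that ties may make the argmin in Definitions \ref{Shortest_Path_Definition}--\ref{def_2} non-unique. I would handle this by assuming the $SP$ sets are taken from a single shortest-path tree rooted at $i$, for example the one produced by Dijkstra's algorithm, as the paper implicitly does. With a tree, prefixes of tree paths are tree paths, and property 2 holds by construction. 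I would state this tie-breaking convention explicitly as the one assumption the proof relies on.
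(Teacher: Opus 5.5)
Your proof is correct, and for SPC-1 and SPC-2 it follows the paper's case split on $x_{i,(j,k)}\in\{0,1\}$. You also make explicit the binary-sum step that the paper leaves tacit.

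Where you go beyond the paper is SPC-3. The paper's proof of this theorem only says that SPC-3 ``enforces only that the edge that precedes $(j,k)$'' be allocated to $i$. The chaining that recovers the whole path appears only as the transitivity argument in the proof of Theorem~\ref{thm:tighter}. You turn that chaining into an explicit backward induction and isolate the lemma it rests on: the tail node of each $e_t$ is $v_{t-1}$, and $SP_{i,e_t}$ is the prefix $\{e_1,\dots,e_{t-1}\}$, so $e_{t-1}\in SP_{i,e_t}\cap\sigma(\{e_t\})$.

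Your tie-breaking convention is genuinely needed. Take unit lengths on $(i,a),(i,b),(a,c),(b,c),(c,d),(d,e)$. Definition~\ref{Shortest_Path_Definition} allows $SP_{i,c}=\{(i,a),(a,c)\}$ together with $SP_{i,d}=\{(i,b),(b,c),(c,d)\}$. SPC-3 then chains $(d,e)\to(c,d)\to(a,c)\to(i,a)$. Giving $(i,b)$ and $(b,c)$ to a second center $b$ therefore satisfies SPC-3 but violates SPC-2. So a single shortest-path tree per root $i$ (one Dijkstra run per $i$) is exactly the hypothesis that both your proof and the paper's transitivity argument require. The paper uses it without stating it.

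Three minor points:
\begin{itemize}
\item Your opening reduction says each family is equivalent to $(\star)$ for a fixed pair $(i,(j,k))$. That fails for SPC-3 taken pair by pair; only the whole family is equivalent, which is how you actually argue.
\item Property~1 needs only that $e_{t-1}$ belongs to $SP_{i,e_t}\cap\sigma(\{e_t\})$, not that it is the unique such edge.
\item The first step, $e_r\in\sigma(\{(j,k)\})$, uses the fact that the other endpoint of $(j,k)$ is not on $SP_{i,(j,k)}$. If it were, it would be strictly closer to $i$, and Definition~\ref{def_2} would have chosen it.
\end{itemize}
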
 
    
\begin{proof} \label{append:1} If $x_{i,\left(j,k\right)}=0$, all three left-hand sides become 0, making the constraints redundant. If $x_{i,\left(j,k\right)}=1$, the shortest path allocation requirement is imposed by each constraint set. SPC-1 forces the full set of edges along $SP_{i,(j,k)}$ to be allocated to node $i$, using a single inequality. SPC-2 relates, as separate expressions, the allocation to center $i$ of each $(l,m)\in SP_{i,(j,k)}$ with the corresponding allocation variable $x_{i,\left(j,k\right)}$. SPC-3 follows a similar approach as SPC-2, but it enforces only that the edge that precedes $\left(j,k\right)\in SP_{i,(j,k)}$ must be allocated to $i$. \end{proof}
 
\begin{theorem} Define the polytope $\mathcal{P}_\text{SPC-1}=\lbrace \mathbf{(x,w)} \in \left[ 0,1 \right]^{|V|\times |E|} \times \left[ 0,1 \right]^{|V|} \text{: }\mathbf{(x,w)}  \\ \text{ satisfies \eqref{const:110}-\eqref{const:1121}, \eqref{eqn:ineq_new}}\rbrace$ and the polytope $\mathcal{P}_\text{SPC-2}=\lbrace \mathbf{(x,w)} \in \left[ 0,1 \right]^{|V|\times |E|} \times \left[ 0,1 \right]^{|V|} \text{: }\mathbf{(x,w)}  \\ \text{ satisfies \eqref{const:110}-\eqref{const:1121}, \eqref{eqn:ineq_new_1}}\rbrace$. For any instance of EC$p$M, $\mathcal{P}_\text{SPC-2} \subseteq \mathcal{P}_\text{SPC-1}$, and this inclusion can be strict.
\label{thm:tight}\end{theorem}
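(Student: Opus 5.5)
Two things must be shown: the inclusion $\mathcal{P}_\text{SPC-2} \subseteq \mathcal{P}_\text{SPC-1}$, and a small instance where the inclusion is strict.

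\textbf{Inclusion.} Take any $(\mathbf{x},\mathbf{w}) \in \mathcal{P}_\text{SPC-2}$, and fix $i \in V$ and $(j,k)\in E$. By \eqref{eqn:ineq_new_1}, we have $x_{i,(j,k)} \le x_{i,(l,m)}$ for each $(l,m)\in SP_{i,(j,k)}$. Summing these $|SP_{i,(j,k)}|$ inequalities over $(l,m)\in SP_{i,(j,k)}$ gives
\[
|SP_{i,(j,k)}|\, x_{i,(j,k)} \le \sum_{(l,m)\in SP_{i,(j,k)}} x_{i,(l,m)} ,
\]
which is exactly \eqref{eqn:ineq_new}. Constraints \eqref{const:110}--\eqref{const:1121} and the box constraints are common to both polytopes, so $(\mathbf{x},\mathbf{w})\in\mathcal{P}_\text{SPC-1}$. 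Since every instance of EC$p$M produces the same pair of constraint families, this holds for any instance.

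\textbf{Strictness.} SPC-1 is an aggregated form of SPC-2, so a fractional point can satisfy the sum while violating one summand. I need a center $i$ and an edge $(j,k)$ with $|SP_{i,(j,k)}|\ge 2$. The graph in Figure~\ref{Fig:mot1} works: take $i=5$ and the edge $(2,8)$, so that $SP_{5,(2,8)} = \{(5,1),(1,2)\}$.

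The plan for the point is as follows.
\begin{itemize}
\item Set $w_5 = 1$ and $x_{5,(2,8)} = 1/2$, so the left-hand side of \eqref{eqn:ineq_new} equals $2\cdot\tfrac12 = 1$.
\item Set $x_{5,(5,1)} = 1$ and $x_{5,(1,2)} = 0$, so the right-hand side also equals $1$. This violates \eqref{eqn:ineq_new_1} for $(l,m)=(1,2)$.
\item Make \eqref{const:110} hold by giving every edge not fully assigned to node $5$ the remaining mass at another center, say node $1$ with $w_1 = 1$. Then $\sum_i w_i = 2 = p$, and \eqref{const:1121} holds because every positive $x$ sits at a node with $w=1$.
\end{itemize}

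\textbf{Main obstacle.} The hard part is ensuring that every other SPC-1 constraint, for both centers $1$ and $5$, remains satisfied, since SPC-1 must hold for all $i$ and all edges simultaneously. I would handle this by a closure rule: assign each edge fully ($x=1$) to one of the two centers so that every shortest path from that center to the edge is also fully assigned to it. With this rule, SPC-1 holds with full terms for every edge except $(2,8)$ and $(1,2)$.

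The two fractional edges then need individual checks.
\begin{itemize}
\item For $(1,2)$, we have $x_{5,(1,2)} = 0$, so the SPC-1 constraint at center $5$ is trivial. At center $1$ we have $x_{1,(1,2)} = 1$, and since $(1,2)$ is incident to node $1$, the constraint holds.
\item For $(2,8)$, we have $x_{1,(2,8)} = 1/2$ and $SP_{1,(2,8)} = \{(1,2)\}$, giving $1\cdot\tfrac12 \le x_{1,(1,2)} = 1$, which holds.
\end{itemize}

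I would also check that the edges near $(2,8)$, in particular $(8,4)$, can be assigned to center $1$ via $(1,4)$ so that the closure rule is respected. If a tie in the shortest paths causes trouble, I would instead use a minimal path graph $5$--$1$--$2$--$8$ with $p=2$, where the same point works directly.
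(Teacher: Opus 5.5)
Your proof is correct and follows the paper's route. The inclusion comes from summing SPC-2 over $SP_{i,(j,k)}$, and strictness comes from a fractional point with two centers in which the aggregated SPC-1 inequality along a two-edge shortest path holds while one SPC-2 summand fails. The only difference is that you fix a concrete instance (the graph of Figure~\ref{Fig:mot1} with $p=2$, or the path $5$--$1$--$2$--$8$) rather than the paper's generic three-hop template with its case analysis for completing the allocation. That suffices for the existential ``can be strict'' claim and is easier to verify exhaustively. A consistent completion does exist: centers $1$ and $5$, with $(1,5),(5,6),(3,6)$ assigned to $5$, $(2,8)$ split $\tfrac12/\tfrac12$, and all other edges assigned to $1$.
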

\begin{theorem} Let $\mathcal{P}_\text{SPC-2}$ be defined as in Theorem \ref{thm:tight} and define the polytope $\mathcal{P}_\text{SPC-3}=\lbrace \mathbf{(x,w)} \in \left[ 0,1 \right]^{|V|\times |E|} \times \left[ 0,1 \right]^{|V|} \text{: }\mathbf{(x,w)} \text{ satisfies \eqref{const:110}-\eqref{const:1121}, \eqref{eqn:ineq_new_2}}\rbrace$. Then, $\mathcal{P}_\text{SPC-2} = \mathcal{P}_\text{SPC-3}$. 
\label{thm:tighter}\end{theorem}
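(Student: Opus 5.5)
Proving $\mathcal{P}_\text{SPC-2} = \mathcal{P}_\text{SPC-3}$ comes down to two inclusions. Both polytopes share \eqref{const:110}--\eqref{const:1121} and the box constraints, so only the contiguity inequalities need comparison.

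\emph{Inclusion $\mathcal{P}_\text{SPC-2} \subseteq \mathcal{P}_\text{SPC-3}$.} Every inequality of \eqref{eqn:ineq_new_2} also appears in \eqref{eqn:ineq_new_1}, since $SP_{i,(j,k)}\cap \sigma(\{(j,k)\}) \subseteq SP_{i,(j,k)}$. Hence any $(\mathbf{x},\mathbf{w})$ satisfying SPC-2 satisfies SPC-3.

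\emph{Inclusion $\mathcal{P}_\text{SPC-3} \subseteq \mathcal{P}_\text{SPC-2}$.} Here we show that each SPC-2 inequality is implied by a chain of SPC-3 inequalities. We use transitivity of $\leq$, and the argument is valid for fractional points as well.

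1. Fix $i$ and $(j,k)$, and write $SP_{i,(j,k)}$ as the ordered path $e_1, e_2, \dots, e_r$ of edges from $i$, with $(j,k)$ attached after $e_r$ at its tail node.

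2. The key structural step is consistency of shortest paths. We must argue that for each $t$, the shortest path $SP_{i,e_t}$ is exactly $e_1,\dots,e_{t-1}$. Hence the edge of $SP_{i,e_t}$ adjacent to $e_t$ is $e_{t-1}$, and the edge of $SP_{i,(j,k)}$ adjacent to $(j,k)$ is $e_r$.
   - This follows from the subpath optimality of shortest paths: a prefix of a shortest path is a shortest path.
   - One must also check that the tail of $e_t$ chosen in Definition~\ref{def_2} is its endpoint nearer to $i$. This holds because the other endpoint lies one positive-length edge further along a shortest path.

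3. Given step 2, SPC-3 yields the chain
\[
x_{i,(j,k)} \le x_{i,e_r} \le x_{i,e_{r-1}} \le \cdots \le x_{i,e_1}.
\]
   Each SPC-2 inequality $x_{i,(j,k)}\le x_{i,e_t}$ then follows by transitivity.

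4. The case $SP_{i,(j,k)}=\emptyset$ ($i$ incident to $(j,k)$) is vacuous in both sets.

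The main obstacle is step 2. Shortest paths need not be unique, and the $\arg\min$ in Definitions~\ref{Shortest_Path_Definition} and~\ref{def_2} is a tie-broken selection. So $SP_{i,e_t}$ might a priori be a different shortest path than the prefix of $SP_{i,(j,k)}$, which would break the chain. I would handle this by adopting the standard convention that all $SP_{i,\cdot}$ are read off a single shortest-path tree rooted at $i$, for example the Dijkstra predecessor tree the paper uses to compute them. Under that convention prefixes are consistent, and the predecessor of $(j,k)$ is its parent edge in the tree. I would state this assumption explicitly as the precondition under which the equality holds.
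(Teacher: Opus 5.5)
Your proposal is correct and follows essentially the same route as the paper. For one inclusion, SPC-3 is a subfamily of SPC-2, so $\mathcal{P}_\text{SPC-2}\subseteq\mathcal{P}_\text{SPC-3}$; for the other, chaining SPC-3 backward along $SP_{i,(j,k)}$ with transitivity of $\leq$ (valid at fractional points) recovers every SPC-2 inequality. Your convention that all $SP_{i,\cdot}$ come from one shortest-path tree rooted at $i$ is what the paper tacitly assumes when it treats the predecessor edge on $SP_{i,(k,l)}$ as the next edge of $SP_{i,(j,k)}$, and it is genuinely needed, since under inconsistent tie-breaking an integer point can satisfy SPC-3 while violating SPC-2.
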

There are $\mathcal{O}\left(|V|^2|E|\right)$  constraints of type SPC-2, and there are $\mathcal{O}\left( |V||E|\right)$  constraints of types SPC-1 and SPC-3. Because constraints SPC-3 induce a linear programming relaxation that is at least as tight as that obtained with constraints SPC-1 (see Theorem \ref{thm:tight}), the former are adopted for the remainder of this work. Therefore, the EC$p$M-SPC formulation is given by:\lllV

\begin{flalign}
&\hspace{0pt}\displaystyle  
\text{(EC$p$M-SPC)}
&& \min \sum_{i \in V} \sum_{\left(j,k\right) \in E} {d_{i,\left(j,k\right)}} {x_{i,\left(j,k\right)}} & \tag{\ref{obj}}
\end{flalign}\lllV
\begin{flalign}
&& \text{s.t.}
&& \eqref{const:110}-\eqref{const:11} \nonumber\\
&&&& x_{i,\left(j,k\right)}\leq  x_{i,\left(l,m\right)}  &&\forall i \in V, \forall \left(j,k\right) \in E, (l,m) \in SP_{i, (j,k)}\cap \sigma(\left\lbrace (j,k)\right\rbrace) &\tag{$8^{\prime\prime\prime}$} \label{ECPM_2-SPC}
\end{flalign}

The contiguity criterion is not required by the E$p$M problem. Thus, at first glance, the SPC constraints may seem extraneous to this simpler model. Nevertheless, we show that they can also expedite its solution. To that end, it is useful to differentiate between three types of cutting plane techniques that can be imposed to reduce the solution space. The best known and most widely applied type, \textit{valid inequalities} (referred alternatively as valid logic cuts), can remove only fractional points of the relaxed feasible region, meaning that they do not remove any integer feasible points. 
On the other hand, non-valid logic cuts can make deeper cuts to the solution space by additionally allowing the removal of integer-feasible, non-optimal solutions. This type of cutting planes was first introduced by \cite{Hooker1994a} and have been applied in capacitated warehouse location  \citep{Osorio1999}, design of chemical processing networks \citep{Hooker1994a}, and multilevel generalized assignment \citep{Osorio2003}. The third type of cutting plane technique, known as supervalid inequalities, also allows the removal of non-optimal solutions, that are integer-feasible \citep{israeli2002shortest}. However, they go one step further by allowing the removal of some, but not all, of the optimal solutions. Thus, they can make larger cuts to the solution space than the first two approaches. The latter type of cutting planes have been applied to interdiction problems \citep{sefair2017defender, baycik2019robust, wei2024supervalid} and routing problems \citep{yuan2020note,yuan2021mixed}.

Theorem \ref{thm:3} shows that the SPC constraints are supervalid inequalities of E$p$M. Beforehand, the ensuing example motivates the featured insight regarding this additional use of the SPC constrains (see Figure \ref{fig:main}).\bla

\begin{exmp}\label{exp1}
Consider the connected planar graph $G=(V,E)$ depicted in Figure \ref{Fig10}, where $|V|=9$, $|E|=11$, and $p=2$. Figure \ref{Fig5} depicts a non-contiguous E$p$M solution, and Figure \ref{Fig6} shows an E$p$M solution satisfying the SPC constraints. In each case, solid edges correspond to the first territory, whereas dotted edges correspond to the second territory; the centers of the two territories follow the same formatting scheme as the edges. In both solutions depicted, nodes $1$ and $6$ are the centers. The sole difference is the allocation of edge $(2,7)$: In \ref{Fig5}, it is allocated to the first territory, but in \ref{Fig6} it is allocated to the second territory. The latter allocation dominates the former, because $d_{1,(2,7)}=5 < 10 = d_{6,(2,7)}$. The difference stems from the fact that allocating edge $(7,9)$ to node $1$ and enforcing SPC forces all edges along $SP_{1,(7,9)}$, including $(2,7)$, to be allocated to center node $1$. The following theorem generalizes the insights of this example. 
    
\end{exmp}
\begin{footnotesize}
\begin{figure}[!h]
    \centering
    \begin{subfigure}[b]{0.45\textwidth}
        \scalebox{0.75}{ 
            \begin{tikzpicture}
                \Vertex[x=0,y=0,label=1,color=white]{A}
                \Vertex[x=0,y=2,label=2,color=white]{B}
                \Vertex[x=-2,y=0,label=3,color=white]{C}
                \Vertex[x=2,y=0,label=4,color=white]{D}
                \Vertex[x=0,y=-2,label=5,color=white]{E}
                \Vertex[x=-2,y=-2,label=6,color=white]{F}
                \Vertex[x=-2,y=2,label=7,color=white]{G}
                \Vertex[x=2,y=2,label=8,color=white]{H}
                \Vertex[x=-2,y=3.5,label=9,color=white]{I}
                \Edge[lw=2pt,label=5,position=left](A)(B)
                \Edge[lw=2pt,label=5,position=above](A)(C)
                \Edge[lw=2pt,label=5,position=above](A)(D)
                \Edge[lw=2pt,label=5,position=right](A)(E)
                \Edge[lw=2pt,label=5,position=left](C)(F)
                \Edge[lw=2pt,label=5,position=above](E)(F)
                \Edge[lw=2pt,label=5,position=left](C)(G)
                \Edge[lw=2pt,label=5,position=above](B)(G)
                \Edge[lw=2pt,label=5,position=above](B)(H)
                \Edge[lw=2pt,label=5,position=right](H)(D)
                \Edge[lw=2pt,label=2,position=left](G)(I)
            \end{tikzpicture}
        }
        \caption{Road Network to be Partitioned into Two Compact and Contiguous Territories with E$p$M.} \label{Fig10}
    \end{subfigure}\hfill
    \begin{subfigure}[b]{0.45\textwidth}
        \scalebox{0.75}{
            \begin{tikzpicture}
                \Vertex[x=0,y=0,label=1,color=white,style=dotted]{A}
                \Vertex[x=0,y=2,label=2,color=white]{B}
                \Vertex[x=-2,y=0,label=3,color=white]{C}
                \Vertex[x=2,y=0,label=4,color=white]{D}
                \Vertex[x=0,y=-2,label=5,color=white]{E}
                \Vertex[x=-2,y=-2,label=\textcolor{white}{6},color=black]{F}
                \Vertex[x=-2,y=2,label=7,color=white]{G}
                \Vertex[x=2,y=2,label=8,color=white]{H}
                \Vertex[x=-2,y=3.5,label=9,color=white]{I}
                \Edge[lw=2pt,style=dotted](A)(B)
                \Edge[lw=2pt,style=dotted](A)(C)
                \Edge[lw=2pt,style=dotted](A)(D)
                \Edge[lw=2pt,style=dotted](A)(E)
                \Edge[lw=2pt](C)(F)
                \Edge[lw=2pt](E)(F)
                \Edge[lw=2pt](C)(G)
                \Edge[lw=2pt](B)(G)
                \Edge[lw=2pt,style=dotted](B)(H)
                \Edge[lw=2pt,style=dotted](H)(D)
                \Edge[lw=2pt,style=dotted](G)(I)
            \end{tikzpicture}
        }
        \caption{E$p$M Solution before Imposing SPC\\ (Dispersion = $35$).} \label{Fig5}
    \end{subfigure}\hfill
    \begin{subfigure}[b]{0.35\textwidth}
        \scalebox{0.75}{
            \begin{tikzpicture}
                \Vertex[x=0,y=0,label=1,color=white,style=dotted]{A}
                \Vertex[x=0,y=2,label=2,color=white]{B}
                \Vertex[x=-2,y=0,label=3,color=white]{C}
                \Vertex[x=2,y=0,label=4,color=white]{D}
                \Vertex[x=0,y=-2,label=5,color=white]{E}
                \Vertex[x=-2,y=-2,label=\textcolor{white}{6},color=black]{F}
                \Vertex[x=-2,y=2,label=7,color=white]{G}
                \Vertex[x=2,y=2,label=8,color=white]{H}
                \Vertex[x=-2,y=3.5,label=9,color=white]{I}
                \Edge[lw=2pt,style=dotted](A)(B)
                \Edge[lw=2pt,style=dotted](A)(C)
                \Edge[lw=2pt,style=dotted](A)(D)
                \Edge[lw=2pt,style=dotted](A)(E)
                \Edge[lw=2pt](C)(F)
                \Edge[lw=2pt](E)(F)
                \Edge[lw=2pt](C)(G)
                \Edge[lw=2pt,style=dotted](B)(G)
                \Edge[lw=2pt,style=dotted](B)(H)
                \Edge[lw=2pt,style=dotted](H)(D)
                \Edge[lw=2pt,style=dotted](G)(I)
                \coordinate (dummy1) at (5.5,0);
                \coordinate (dummy2) at (6,0);
                \draw[black,line width=2pt] (dummy1) -- (dummy2) node[pos=1, right, font=\small, black] {First Territory};
                \coordinate (dummy3) at (5.5,-1);
                \coordinate (dummy4) at (6,-1);
                \draw[dotted, black,line width=2pt] (dummy3) -- (dummy4) node[pos=1, right, font=\small, black] {Second Territory};
            \end{tikzpicture}
        }
        \caption{E$p$M Solution after Imposing SPC (Dispersion = $30$).} \label{Fig6}
    \end{subfigure}
    \caption{Two E$p$M Solutions of the Road Network.}
    \label{fig:main}
\end{figure}
\end{footnotesize}
\bla

\begin{theorem} Let $G=\left(V,E\right)$ be an undirected connected planar graph with $|E| \geq 2$. In an optimal solution to E$p$M, if edge $(j,k)$ is assigned to center node $i$ (i.e., $x_{i,(j,k)}=1$), then there exists an optimal solution where all edges in $SP_{i,(j,k)}$ are also assigned to $i$. 
\label{thm:3}\end{theorem}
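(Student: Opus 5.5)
The plan is a reassignment argument. In E$p$M the locations of the centers are decoupled from the contiguity question. Once the set of open centers $C=\{c: w_c=1\}$ is fixed, the objective \eqref{obj} separates over edges. Hence an optimal solution allocates every edge to a center of minimum distance $d_{c,(j,k)}$, $c\in C$. Fix an optimal solution $(\mathbf{x}^*,\mathbf{w}^*)$ with $x^*_{i,(j,k)}=1$. Optimality then gives $d_{i,(j,k)}\le d_{c,(j,k)}$ for every $c\in C$, since otherwise moving $(j,k)$ to a closer open center would strictly decrease \eqref{obj}. I will build a new solution $(\mathbf{x}',\mathbf{w}^*)$ as follows: every edge of $SP_{i,(j,k)}$ is reassigned to $i$, and everything else is unchanged. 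Constraints \eqref{const:110}--\eqref{const:11} clearly still hold, because $w^*_i=1$ and the set of centers is the same. It then suffices to show that no reassigned edge increases its cost.

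The key step is a distance inequality along the path. By Definitions \ref{Shortest_Path_Definition}--\ref{def_1}, $SP_{i,(j,k)}$ is a shortest path $i=v_0,v_1,\dots,v_t$, where $v_t\in\{j,k\}$ is the endpoint nearer to $i$. Also $d_{i,(j,k)}=\mathrm{dist}(v_0,v_t)$, and in general $d_{c,(l,m)}=\min\{\mathrm{dist}(c,l),\mathrm{dist}(c,m)\}$. Take an edge $(v_r,v_{r+1})$ on the path. Subpaths of shortest paths are shortest, so $d_{i,(v_r,v_{r+1})}=\mathrm{dist}(i,v_r)$ and $\mathrm{dist}(i,v_t)=\mathrm{dist}(i,v_r)+\mathrm{dist}(v_r,v_t)$. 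For any other open center $c$, let $u\in\{v_r,v_{r+1}\}$ be the endpoint attaining $d_{c,(v_r,v_{r+1})}$. Then
\[
d_{c,(j,k)}\le \mathrm{dist}(c,v_t)\le d_{c,(v_r,v_{r+1})}+\mathrm{dist}(u,v_t)\le d_{c,(v_r,v_{r+1})}+\mathrm{dist}(v_r,v_t),
\]
where the last step uses $\mathrm{dist}(v_{r+1},v_t)\le \mathrm{dist}(v_r,v_t)$ along the path. Combining this with $\mathrm{dist}(i,v_r)+\mathrm{dist}(v_r,v_t)=d_{i,(j,k)}\le d_{c,(j,k)}$ yields $d_{i,(v_r,v_{r+1})}\le d_{c,(v_r,v_{r+1})}$. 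So $i$ is also a nearest open center for every edge of $SP_{i,(j,k)}$.

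Consequently the reassignment does not increase \eqref{obj}, and $(\mathbf{x}',\mathbf{w}^*)$ is optimal with all edges of $SP_{i,(j,k)}$ allocated to $i$. The edge $(j,k)$ itself is untouched. The assumption $|E|\ge 2$ only rules out the degenerate case in which the path is empty and the claim is vacuous. If $i\in\{j,k\}$, the path is empty and there is nothing to prove.

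The main obstacle I expect is bookkeeping with the paper's edge-distance convention and tie-breaking. The quantity $d$ measures distance to the nearer endpoint, not the full edge. The $\arg\min$ in Definitions \ref{Shortest_Path_Definition}--\ref{def_2} picks one particular shortest path. I must check that $d_{i,(v_r,v_{r+1})}$ is attained at $v_r$, which holds because $v_r$ precedes $v_{r+1}$ on a shortest path from $i$. The argument above uses only distances, not the specific selected paths, so arbitrary tie-breaking should not matter.
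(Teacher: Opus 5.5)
Your proof is correct and rests on the same triangle-inequality idea as the paper's: if some open center were strictly closer than $i$ to an edge of $SP_{i,(j,k)}$, it would also be strictly closer to $(j,k)$, contradicting the optimal allocation of $(j,k)$ to $i$. The paper argues by contradiction and splits into two cases by which endpoint of the path edge is nearer the competing center. You instead argue directly (non-strict inequality for every open center, then an explicit reassignment), merge the two cases via the endpoint $u$, and correctly use $d_{c,(j,k)}\le \mathrm{dist}(c,v_t)$ where the paper writes an equality that holds only as an inequality in general.
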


\llV\llV\llV
\subsection{Computational Results for EC$p$M}
\label{computational_results}
Next, we describe computational tests to evaluate the impact of the SPC constraints on solving EC$p$M and E$p$M. The models were coded in Python $3.8$ and solved using CPLEX $22.1.1$. The computational tests were performed on $28$ cores of a high performance computing node equipped with $128$ GB of memory and an Intel Xeon E5 v4 Processor running at $2.40$ GHz. For each instance and solution method, a time limit of $12$ hours was imposed. 

The problem instances are derived from $14$ real-world road networks located in Denmark, which were originally developed for the capacitated arc routing problem \citep{kiilerich2018new}; they are deployed therein for waste collection and are herein repurposed for more general location-allocation problems to test the featured E$p$M models. The selected road networks  encompass small-, medium, and large-scale problem instances, with the cardinality of $V$ ranging from $198$ to $2,773$ nodes and the cardinality of $E$ ranging from $265$ to $3,472$ edges. From these fourteen road networks, a total of $84$ instances are created by fixing one of six numbers of districts, namely, $p \in \left\lbrace 2,10,30,40,50,100\right\rbrace$. 

Table \ref{tab:Instances} shows key instance information (the respective column labels are provided in the adjoining parentheses): road network number (RN), road network name (RN Name), number of nodes ($|V|$), number of edges ($|E|$), and the associated number of binary variables (NBV) in the EC$p$M model. The table also shows computational performance metrics: average computational time of solving EC$p$M with cut set contiguity constraints via the branch-and-cut scheme (EC$p$M-CSC), average computational time of solving EC$p$M with the shortest-path contiguity constraints (EC$p$M-SPC), and the improvement factor of EC$p$M-SPC relative to EC$p$M-CSC (CSC/SPC). More specifically, the latter performance metric is calculated as
\setcounter{equation}{8}
\begin{align}\label{Improvement Factor}
\text{CSC/SPC}=\frac{\text{Average time to solve the instances with EC$p$M-CSC}}{\text{Average time to solve the instances with EC$p$M-SPC}}.
\end{align}
Hence, a ratio greater than 1 indicates that EC$p$M-SPC was solved faster than EC$p$M-CSC. \\

\begin{table}[!ht]
\centering
\caption{On Average, EC$p$M-SPC Outperformed EC$p$M-CSC on All Tested Road Networks}
\small
\setlength{\tabcolsep}{4pt}
\renewcommand{\arraystretch}{0.9}
\begin{tabular}{ccrrrrrr}
\toprule
RN No. & RN Name & $|V|$ & $|E|$ & NBV & EC$p$M-CSC & EC$p$M-SPC & CSC/SPC \\
\midrule
1  & F15\_g  &   198  &   265  &    52,668   &   142   &     8    & 17.31 \\
2  & N17\_g  &   268  &   305  &    82,008   &   141   &    15    &  9.58 \\
3  & K13\_g  &   393  &   421  &   165,846   &   284   &    34    &  8.24 \\
4  & F6\_p   &   502  &   741  &   372,484   & 1,414   &    94    & 14.98 \\
5  & O12\_g  &   761  &   852  &   649,133   & 1,766   &   202    &  8.74 \\
6  & S9\_p   &   782  & 1,030  &   806,242   & 1,958   &   305    &  6.43 \\
7  & N16\_g  &   915  & 1,030  &   943,365   & 2,086   &   404    &  5.16 \\
8  & S9\_g   & 1,094  & 1,447  & 1,584,112   & 4,151   &   962    &  4.31 \\
9  & S11\_g  & 1,564  & 1,805  & 2,824,584   & 7,005   & 2,985    &  2.35 \\
10 & K9\_p   & 1,735  & 2,226  & 3,863,845   &12,445   & 4,376    &  2.84 \\
11 & N11\_g  & 2,134  & 2,411  & 5,147,208   &18,987   & 6,778    &  2.80 \\
12 & N15\_g  & 2,171  & 2,459  & 5,340,660   &21,357   & 7,317    &  2.92 \\
13 & N9\_g   & 2,018  & 2,655  & 5,359,808   &18,622   & 7,131    &  2.61 \\
14 & K9\_g   & 2,773  & 3,472  & 9,630,629   & \textasteriskcentered & 16,880 & \textasteriskcentered \\
\bottomrule
\end{tabular}
\vspace{2pt}
\begin{minipage}{\linewidth}
\smallskip
\small * The computer ran out of memory for all instances associated with this road network.
\end{minipage}
\label{tab:Instances}
\end{table}\llV\lV

\textbf{Analysis of the computational impact of the SPC constraints on the edge-based contiguous $p$-median problem}. The first salient observation is that, for all six instances associated with the largest road network tested, namely RN$14$, EC$p$M-CSC ran out of memory. Conversely, EC$p$M-SPC was able to solve  each instance associated with this road network in $4.69$ hours, on average. Restricted to instances associated with RN1 to RN13, the average and median improvement factors of EC$p$M-SPC relative to EC$p$M-CSC (i.e., CSC/SPC) were $6.79$ and $5.16$, respectively. These improvement factors decrease as the network size increases, ranging from $17.31$ for RN$1$ to $2.61$ for RN$13$. Over these thirteen road networks, EC$p$M-SPC outperformed EC$p$M-CSC in $83.3\%$ (65 of 78) of all associated instances. The remaining instances where EC$p$M-SPC did not achieve the fastest computational times each involved small settings of the district parameter, namely, $p$ = 2, 10. In other words, EC$p$M-SPC tends to outperform EC$p$M-CSC as the number of possible location and allocation decisions increases in the network.  

The computational benefits of the SPC constraints are more pronounced on instances with more districts. As a representative example, for the instance of RN$12$ with $p=100$, solving EC$p$M-CSC took $10.88$ hours, but solving EC$p$M-SPC took only $54.7$ \textit{minutes}, yielding an $11.95$x improvement. To further delve into the computational benefits of the SPC constraints as $p$ increases, we focus on four road networks of increasing sizes, namely, RN$1$, RN$5$, RN$9$, and RN$12$. Figure \ref{Chart1} shows that CSC/SPC increases as $p$ increases; here, note that the improvement factors are calculated using the solution times attained by EC$p$M-CSC and EC$p$M-SPC for each individual instance (as opposed to the average times over multiple instances). One plausible explanation for the observed increase is that, as the number of districts increases, branch-and-cut incurs more computational overhead---i.e., a higher number of variable subsets must be checked for contiguity and separating constraints added, whenever an integer-feasible solution is found. Conversely, EC$p$M-SPC has a fixed polynomial number of contiguity constraints, more specifically $\mathcal{O}\left( |V||E|\right)$ of them, and these are always added in full up front.


\begin{figure}[!h]
	\begin{center}
		\centering
		\includegraphics[width=0.65\textwidth]{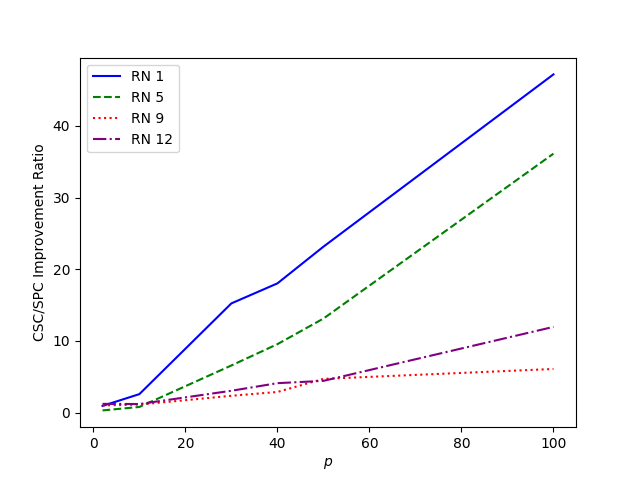}
	   \caption{The improvement of EC$p$M-SPC relative to EC$p$M-CSC tends to grow with $p$}	
  \label{Chart1}
	\end{center}
\end{figure}\llV

\textbf{Analysis of the computational impact of the SPC constraints on the edge-based $p$-median problem}. Next, we illustrate usefulness of the SPC constraints beyond enforcing contiguity. Specifically, we measure their effect on solving the E$p$M problem---recall that the SPC constraints represent supervalid inequalities for this simpler problem, as is explained in Section \ref{derive}. To that end, we compare the solution times of the E$p$M and EC$p$M-SPC models over the fourteen road networks; Table \ref{EPM} summarizes the results. 

For instances associated with RN1 to RN8, E$p$M outperformed EC$p$M-SPC, which suggests that the larger model size caused by the addition of the SPC constraints had a bigger effect than the resulting reduction in the feasibility space. On the other hand, for all but one instance associated with RN9 to RN13 (each with $|E| \ge 1,805$), EC$p$M-SPC outperformed E$p$M by $2.45$x, on average. Similar to EC$p$M-CSC in the prior experiment, E$p$M ran out of memory for all instances associated with RN $14$, but EC$p$M-SPC solved all of these instances to optimality, which is surprising in that E$p$M is a relaxation of EC$p$M. This illustrates that the benefit of adding the SPC constraints as supervalid inequalities for E$p$M is best realized on larger and more complex problem instances.

We remark that a branch-and-bound with cut generation (B\&B\&Cut) scheme for solving EC$p$M–CSC was considered but ultimately ruled out based on its relationship to the other tested methods. 
Its core distinction from B\&C is that B\&B\&Cut restarts the branch and bound algorithm after each cut-generation round (see Sec. \ref{background} for more details). Because EC$p$M–SPC clearly outperforms the simpler E$p$M model (which has no contiguity constraints) on the six largest networks, as described in the above paragraph, it can be concluded that B\&B\&Cut would fare no better on the more challenging instances. Hence, this alternative method was not implemented for this problem. 

\begin{table}[!ht]
\centering
\caption{Adding SPC Constraints as Supervalid Inequalities to EPM Benefited Instances from the Six Largest Road Networks}
\small
\setlength{\tabcolsep}{4pt} 
\renewcommand{\arraystretch}{0.9} 
\begin{tabular}{ccrrrrrr}
\toprule
RN No. & RN Name & $|V|$ & $|E|$ & NBV & EPM & EC$p$M-SPC & EPM/SPC \\
\midrule
1  & F15\_g  &   198  &   265  &   52,668   &     5   &     8   &  0.64 \\
2  & N17\_g  &   268  &   305  &   82,008   &    10   &    15   &  0.65 \\
3  & K13\_g  &   393  &   421  &  165,846   &    19   &    34   &  0.55 \\
4  & F6\_p   &   502  &   741  &  372,484   &    43   &    94   &  0.45 \\
5  & O12\_g  &   761  &   852  &  649,133   &    56   &   202   &  0.27 \\
6  & S9\_p   &   782  & 1,030  &  806,242   &   106   &   305   &  0.35 \\
7  & N16\_g  &   915  & 1,030  &  943,365   &   126   &   404   &  0.31 \\
8  & S9\_g   & 1,094  & 1,447  &1,584,112   &   346   &   962   &  0.36 \\
9  & S11\_g  & 1,564  & 1,805  &2,824,584   & 5,497   & 2,985   &  1.84 \\
10 & K9\_p   & 1,735  & 2,226  &3,863,845   & 9,128   & 4,376   &  2.09 \\
11 & N11\_g  & 2,134  & 2,411  &5,147,208   &14,913   & 6,778   &  2.20 \\
12 & N15\_g  & 2,171  & 2,459  &5,340,660   &15,622   & 7,317   &  2.14 \\
13 & N9\_g   & 2,018  & 2,655  &5,359,808   &13,696   & 7,131   &  1.92 \\
14 & K9\_g   & 2,773  & 3,472  &9,630,629   & \textasteriskcentered & 16,880 & \textasteriskcentered \\
\bottomrule
\end{tabular}
\vspace{2pt}
\begin{minipage}{\linewidth}
\smallskip
\small * The computer ran out of memory for all instances associated with this road network.
\end{minipage}
\label{EPM}
\end{table}

\llV\llV
\section{Connections to Edge-based Districting}
\label{sec:2}
EC$p$M can be transformed into a more traditional logistics districting model by imposing a work balance criterion, which is intended to promote fairness among different service agents (e.g., drivers) \citep{Yanık2020}. This criterion can imposed by requiring that the same quantity of demand must be served within each territory, within a certain tolerance; this translates into setting an upper and a lower bound on the cumulative demand served within a district. To introduce these balance constraints, let $b_{(j,k)}$ be the demand along edge $(j,k)$,  $\overline{b}$ be a fixed quantity representing an equal apportionment of demand per district (i.e., obtained by dividing total demand by $p$), and $\tau$ be the allowed percentage deviation from satisfying $\overline{b}$ of demand in each district exactly. The set of work balance constraints is written as, 

\lllV
\begin{align}
\sum\limits_{\left(j,k\right) \in E} b_{(j,k)} {x_{i,\left(j,k\right)}} \leq  \overline{b} (1+\tau) w_{i} &&\forall i \in V &\label{const:112}\\
\sum\limits_{\left(j,k\right) \in E} b_{(j,k)} {x_{i,\left(j,k\right)}} \geq  \overline{b} (1-\tau) w_{i} &&\forall i \in V. &\label{const:113}
\end{align}
\lV
Constraints \eqref{const:112} and \eqref{const:113} give  upper and lower bounds on the total demand served within a territory. We remark that Constraints \eqref{const:1121} become redundant when Constraints \eqref{const:112} are added, as the latter also fulfill the requirement of allowing edge $(j, k)$ to be allocated to node $i$ only if $i$ is selected to be a center node. Thus, in the subsequent computational tests in Section \ref{comp_EBD}, Constraints \eqref{const:1121} are dropped from the featured districting formulations.

\llV\llV\llV
\subsection{Contrasting Edge-based Districting Models}
 
As the above paragraphs explain, adding Constraints \eqref{const:112} and \eqref{const:113} to EC$p$M transforms it into an edge-based districting (EBD) problem. When they are added to EC$p$M-CSC (\eqref{const:110}, \eqref{const:111}, \eqref{const:10} - \eqref{ECPM_2}, the model from \cite{kassem2023models} is obtained; the resulting model is denoted hereafter as EBD-CSC. However, a distinctive districting model is obtained when the balance constraints are added to EC$p$M-SPC (\eqref{const:110}, \eqref{const:111}, \eqref{const:10}, \eqref{const:11}, \eqref{eqn:ineq_new_2}); the resulting model is hereafter denoted as EBD-SPC. This stems from their interaction with shortest-path contiguity constraints, which enforce a stronger form of contiguity compared to the cut-set based constraints. Figure \ref{districting} provides a small example to illustrate the differences. 
 
\begin{footnotesize}
\begin{figure}[!h]
    \small
    \centering
    \begin{subfigure}[b]{0.55\textwidth}
        \scalebox{0.75}{ 
            \begin{tikzpicture}
                \Vertex[x=0,y=-0.5,label=1,color=white]{A}
                \Vertex[x=0,y=1.4,label=2,color=white]{B}
                \Vertex[x=-2,y=-0.5,label=3,color=white]{C}
                \Vertex[x=2,y=-0.5,label=4,color=white]{D}
                \Vertex[x=0,y=-2.4,label=5,color=white]{E}
                \Vertex[x=-2,y=-2.4,label=6,color=white]{F}
                \Vertex[x=-2,y=1.4,label=7,color=white]{G}
                \Vertex[x=2,y=1.4,label=8,color=white]{H}
                \Vertex[x=4,y=-0.5,label=9,color=white]{I}
                \Vertex[x=2,y=-2.4,label=10,color=white]{J}
                \Vertex[x=0,y=-4.4,label=11,color=white]{K}
                \Edge[lw=2pt,label=5,position=left](A)(B) 
                \Edge[lw=2pt,label=5,position=above](A)(C)
                \Edge[lw=2pt,label=5,position=above](A)(D)
                \Edge[lw=2pt,label=5,position=left](A)(E)
                \Edge[lw=2pt,label=5,position=left](C)(F)
                \Edge[lw=2pt,label=5,position=above](E)(F)
                \Edge[lw=2pt,label=5,position=left](C)(G)
                \Edge[lw=2pt,label=5,position=above](B)(G)
                \Edge[lw=2pt,label=5,position=above](B)(H)
                \Edge[lw=2pt,label=5,position=right](H)(D)
                \Edge[lw=2pt,label=5,position=above](D)(I)
                \Edge[lw=2pt,label=5,position=above](E)(J)
                \Edge[lw=2pt,label=6,position=right](D)(J)
                \Edge[lw=2pt,label=5,position=left](E)(K)
            \end{tikzpicture}
        }
        \caption{Road Network to be Partitioned into Two Districts with EBD-CSC and EBD-SPC. With $p=2$, $\tau=20\%$, 
        Demand Served per District Ranges Between $28.4$ and $42.6$.} \label{Fig1}
    \end{subfigure}\hfill
    \begin{subfigure}[b]{0.35\textwidth}
        \scalebox{0.75}{
            \begin{tikzpicture}
                \Vertex[x=0,y=-0.5,label=1,color=white]{A}
                \Vertex[x=0,y=1.4,label=2,color=white]{B}
                \Vertex[x=-2,y=-0.5,label=3,color=white]{C}
                \Vertex[x=2,y=-0.5,label=4,color=white]{D}
                \Vertex[x=0,y=-2.4,label=5,color=white,style=dotted]{E}
                \Vertex[x=-2,y=-2.4,label=6,color=white]{F}
                \Vertex[x=-2,y=1.4,label=\textcolor{white}{7},color=black]{G}
                \Vertex[x=2,y=1.4,label=8,color=white]{H}
                \Vertex[x=4,y=-0.5,label=9,color=white]{I}
                \Vertex[x=2,y=-2.4,label=10,color=white]{J}
                \Vertex[x=0,y=-4.4,label=11,color=white]{K}
                \Edge[lw=2pt](A)(B)
                \Edge[lw=2pt,style=dotted](A)(C)
                \Edge[lw=2pt](A)(D)
                \Edge[lw=2pt,style=dotted](A)(E)
                \Edge[lw=2pt,style=dotted](C)(F)
                \Edge[lw=2pt,style=dotted](E)(F)
                \Edge[lw=2pt](C)(G)
                \Edge[lw=2pt](B)(G)
                \Edge[lw=2pt](B)(H)
                \Edge[lw=2pt](H)(D)
                \Edge[lw=2pt,style=dotted](D)(I)
                \Edge[lw=2pt,style=dotted](E)(J)
                \Edge[lw=2pt,style=dotted](D)(J)
                \Edge[lw=2pt,style=dotted](E)(K)
            \end{tikzpicture}
        }
        \caption{EBD-CSC Solution\\ (Dispersion = $55$). \phantom{write anything} \phantom{here to get to a third line}} \label{Fig2}
    \end{subfigure}\hfill
    \begin{subfigure}[b]{0.35\textwidth}
        \scalebox{0.75}{
            \begin{tikzpicture}
                \Vertex[x=0,y=-0.5,label=1,color=white]{A}
                \Vertex[x=0,y=1.4,label=2,color=white]{B}
                \Vertex[x=-2,y=-0.5,label=3,color=white]{C}
                \Vertex[x=2,y=-0.5,label=4,color=white]{D}
                \Vertex[x=0,y=-2.4,label=5,color=white,style=dotted]{E}
                \Vertex[x=-2,y=-2.4,label=6,color=white]{F}
                \Vertex[x=-2,y=1.4,label=\textcolor{white}{7},color=black]{G}
                \Vertex[x=2,y=1.4,label=8,color=white]{H}
                \Vertex[x=4,y=-0.5,label=9,color=white]{I}
                \Vertex[x=2,y=-2.4,label=10,color=white]{J}
                \Vertex[x=0,y=-4.4,label=11,color=white]{K}
                \coordinate (dummy1) at (6,-0.5);
                \coordinate (dummy2) at (6.5,-0.5);
                \draw[dotted, black,line width=2pt] (dummy1) -- (dummy2) node[pos=1, right, font=\small, black] {First District};
                \coordinate (dummy3) at (6,-1.5);
                \coordinate (dummy4) at (6.5,-1.5);
                \draw[black,line width=2pt] (dummy3) -- (dummy4) node[pos=1, right, font=\small, black] {Second District};;
                \Edge[lw=2pt](A)(B)
                \Edge[lw=2pt,style=dotted](A)(C)
                \Edge[lw=2pt](A)(D)
                \Edge[lw=2pt,style=dotted](A)(E)
                \Edge[lw=2pt,style=dotted](C)(F)
                \Edge[lw=2pt,style=dotted](E)(F)
                \Edge[lw=2pt](C)(G)
                \Edge[lw=2pt](B)(G)
                \Edge[lw=2pt](B)(H)
                \Edge[lw=2pt](H)(D)
                \Edge[lw=2pt](D)(I)
                \Edge[lw=2pt,style=dotted](E)(J)
                \Edge[lw=2pt,style=dotted](D)(J)
                \Edge[lw=2pt,style=dotted](E)(K)
            \end{tikzpicture}
        }
        \caption{EBD-SPC Solution\\ (Dispersion = $60$).} \label{Fig3}
    \end{subfigure}
    \caption{Road Network and Districtings Obtained with EBD-CSC and EBD-SPC.}
    \label{districting}
\end{figure}
\end{footnotesize}

\llV\llV\llV

\newpage
\begin{exmp}\label{exp2}
Consider the connected planar graph depicted in Figure \ref{Fig1}, where $|V|=11$, $|E|=14$, $\tau=20\%$, and $p=2$; therein, it is assumed that $b_{(j,k)}=l_{(j,k)} \text{ } \forall (j,k) \in E$, which gives $\bar{b}=35.5$. Figures \ref{Fig2} and \ref{Fig3} depict the EBD-CSC and EBD-SPC solutions, respectively. In each case, dotted edges correspond to the first district, and solid edges correspond to the second district; the centers of the two districts follow the same formatting scheme as the edges. In both solutions, nodes $5$ and $7$ are the centers. The sole difference is the allocation of edge $(4,9)$, which is allocated to the first district by EBD-CSC and to the second district by EBD-SPC. Note that allocating both $(1,4)$ and $(4,9)$ to $5$ would violate the maximum demand allowed for the first district ($42.6$) and the minimum demand allowed for the second district ($28.4$). In effect, the combination of the SPC and balance constraints forces the allocation of $(4,9)$  to the second district with center node $7$ (since $(1,4)\in SP_{7,(4, 9)}$ and is allocated to the  second district). Stated otherwise, these constraints disallow allocating $(4,9)$  to the first district with center node $5$, as SPC would force $(1,4)$, which is part of $SP_{5,(4, 9)}$, to be also allocated to the first district.  
\end{exmp}
 
Although the dispersion values achieved by the optimal solutions of both models usually match, there are times when the EBD-CSC dispersion value can be lower than that of EBD-SPC. In fact, it is guaranteed that it will never be higher, since the latter imposes a stricter type of contiguity. Example \ref{exp2} shows an instance with such a disparity, specifically, the dispersion value of EBD-CSC (55) is lower than that of EBD-SPC (60). The difference is due to the relationship between $d_{5,(4,9)}$ and $d_{7,(4,9)}$, namely, $10=d_{5,(4,9)}<d_{7,(4,9)}=15$. However, the lower dispersion attained by EBD-CSC comes with a practical drawback, namely that in some of the service areas, extra distance may be unnecessarily traversed by an agent in the subsequent routing operations, when one or more edges along the shortest path from a center node to one of its allocated edges (according to the original territory) is allocated to a different district. In Example \ref{exp2}, edge $(1,4)$, which is part of $SP_{5,(4, 9)}$, is not allocated to the first district. This means that to reach $(4,9)$ from the center node $5$, an agent assigned to serve the first district would have either to take a longer route than the actual shortest path to stay within the district or temporarily leave the district, which could increase the deadhead distance (i.e., travel along roads where no service is performed \citep{GarciaAyala2016}). The differences between the two models also have computational implications, as the ensuing subsection demonstrates.

\subsection{Computational Results for EBD}
\label{comp_EBD}
The same computational resources and specifications from Section \ref{computational_results} carry over to this section. The focus of the featured computational experiments is on EBD, which is a related but more challenging problem than EC$p$M. Owing to this increased difficulty, the computational tests feature only two medium-sized road networks, namely, RN $4$ and RN $5$. From these two road networks, a total of $54$ instances are created by fixing one of nine numbers of districts, namely, $p \in \left\lbrace 2,4,6,8,10,20,30,40,50\right\rbrace$ and one of three work balance tolerance settings, namely, $\tau \in \left\lbrace 1\%,10\%,50\%\right\rbrace$. In all generated instances, the demand (i.e., $b_{(j,k)}$) is set according to the corresponding values in the dataset of \cite{kiilerich2018new}. 

The initial intent of the computational tests was to evaluate the ability to exactly solve  the two districting models, namely EBD-CSC and EBD-SPC. For EBD-CSC, two exact methods were implemented: B\&C and  B\&B\&Cut. B\&C is analogous to the algorithm developed for EC$p$M-CSC, with the addition of balance constraints in the base relaxation models, and B\&B\&Cut is adopted from \cite{kassem2023models} (see also Section \ref{background} for a general description).   
The two approaches behaved similarly across all tested instances, specifically, neither could find a feasible solution within the $12$-hour time limit for any of the tested instances. On the other hand, EBD-SPC was able to solve $59.3\%$ ($32$ of $54$) of the instances to optimality. Additionally, EBD-SPC reached the time limit for $16.7$\% ($9$ of $54$) of the instances with an average relative percentage gap of $8.23$\%, it could not find a feasible solution within the $12$-hour time limit for $3.70$\% ($2$ of $54$) of these instances, and it returned infeasibility for $20.4$\% (11 of $54$) of the instances. For these reasons, this section delves only into the computational results of EBD-SPC.
\bla 

\textbf{Analysis of the impact of the balance parameter $\tau$}. Different computational metrics were recorded to assess the effect of $\tau$ and are reported in Tables \ref{EBD_by_tol} and \ref{Gap}. Table \ref{EBD_by_tol} displays the road network number (RN No.), the tolerance setting ($\tau$), the average computational time of solving EBD with the shortest-path contiguity constraints (EBD-SPC), the number of instances solved to optimality ($\left| \mathcal{I}_{T \leq 12} \neq \emptyset\right|$), the number of instances for which the time limit was reached with a feasible solution ($\left| \mathcal{I}_{T > 12} \neq \emptyset\right|$), the number of infeasible instances identified within the time limit ($\left| \mathcal{I}_{T \leq 12} = \emptyset\right|$), and the number of instances for which the time limit was reached without finding a feasible solution and without identifying the problem as infeasible ($\left| \mathcal{I}_{T > 12} = \emptyset\right|$). As a point of clarification, the average computational times of EBD-SPC are calculated based on instances that reached a feasible or optimal solution within the time limit (i.e., those instances belonging to $ \left| \mathcal{I}_{T \leq 12} \neq \emptyset\right|$ or $\left| \mathcal{I}_{T > 12} \neq \emptyset\right|$). The road network number (RN No.), tolerance setting ($\tau$), and average and max relative optimality gaps (Gap \%) are reported in Table \ref{Gap}. 

Tables \ref{EBD_by_tol} and  \ref{Gap} show that average computational times, average relative percentage gap, and maximum relative percentage gap tend to decrease as $\tau$ increases from $1\%$ to $50\%$. This suggests that it is more difficult to balance the demand between the different districts at low tolerance settings. Indeed, 72.7\% ($8$ of $11$) of the infeasible instances obtained in this analysis arose from the setting $\tau=1\%$. At this low value of $\tau$, the demand must be balanced nearly equally among the districts, which is impossible for some instances. \\
\llV\llV
\begin{table}[!ht]
\centering
\caption{Average EBD-SPC Solution Times (in Hours) Decrease as Tolerance ($\tau$) Increases}
\small 
\setlength{\tabcolsep}{4pt} 
\renewcommand{\arraystretch}{0.9} 
\begin{tabular}{crrrrrr}
\toprule
RN No. & $\tau$ & EBD-SPC\textsuperscript{*} & $ \left| \mathcal{I}_{T \leq 12} \neq \emptyset\right|$ & $\left| \mathcal{I}_{T > 12} \neq \emptyset\right|$ & $\left| \mathcal{I}_{T \leq 12} = \emptyset\right|$ & $\left| \mathcal{I}_{T > 12} \neq \emptyset\right|$ \\
\midrule
4 & 1\%  & 10.20 & 2 & 3 & 4 & 0 \\
4 & 10\% & 8.49  & 4 & 5 & 0 & 0 \\
4 & 50\% & 0.84  & 9 & 0 & 0 & 0 \\
5 & 1\%  & 5.43  & 3 & 1 & 4 & 1 \\
5 & 10\% & 2.08  & 5 & 0 & 3 & 1 \\
5 & 50\% & 1.85  & 9 & 0 & 0 & 0 \\
\bottomrule
\end{tabular}
\vspace{2pt}
\begin{minipage}{\linewidth}
\smallskip
\small\textit{* Considers only instances that reached a solution within 12 hours (i.e., columns $ \left| \mathcal{I}_{T \leq 12} \neq \emptyset\right|$ and $\left| \mathcal{I}_{T > 12} \neq \emptyset\right|$).}
\end{minipage}
\label{EBD_by_tol}
\end{table}

\begin{table}[!ht]
\centering
\caption{Statistics on Relative Percentage Gap for Different Settings of $\tau$ in EBD-SPC}
\small
\setlength{\tabcolsep}{4pt}
\renewcommand{\arraystretch}{0.9}
\begin{tabular}{crrr}
\toprule
RN No. & $\tau$ & Avg Gap \% & Max Gap \% \\
\midrule
4 & 1\%   & 8.86\%         & 12.31\%        \\
4 & 10\%  & 4.94\%         & 7.42\%         \\
4 & 50\%  & $\leq$ 0.01\%  & $\leq$ 0.01\%  \\
5 & 1\%   & 22.85\%        & 22.85\%        \\
5 & 10\%  & $\leq$ 0.01\%  & $\leq$ 0.01\%  \\
5 & 50\%  & $\leq$ 0.01\%  & $\leq$ 0.01\%  \\
\bottomrule
\end{tabular}
\label{Gap}
\end{table}

 
\section{Data Availability Statement}

The data that support the findings of this study will be openly available in ECPM repository at [URL]. 

\lllV\llV
\section{Conclusion}
\label{Conclusion}
This paper introduces and derives two binary programming formulations for the EC$p$M optimization problem. The first model requires an exponential number of cut set-based
constraints and is paired with a branch-and-cut algorithm that generates only a small number of these constraints. The second utilizes a compact set of shortest-path contiguity (SPC) constraints derived herein; the model is solved with off-the-shelf methods. The results show that the SPC constraints can expedite the solution of large-scale instances of this problem and the simpler edge-based $p$-median problems (i.e., EC$p$M without contiguity), with speedups of up to $17$x for the instances tested in this work. Finally, the paper explores structural insights and connections between EC$p$M and the edge-based districting (EBD) problem, which enforces an additional planning criterion (work balance). Balance constraints were added to the two EC$p$M models to obtain two EBD models. The shortest path-based formulation exhibited a superior computational performance, enabling the solution of larger instances, than the cut set-based formulation. As an additional practical benefit, the shortest path-based formulation eliminates a potential drawback in subsequent routing operations by preventing the buildup of deadhead distance and avoiding the need to travel beyond the district boundaries to reach specific customers. Future work will explore additional network insights that can further reduce the computational effort of solving EC$p$M and EBD.
\llV\llV\llV
\begingroup
\small                    
\setlength{\bibsep}{0pt}  
\bibliographystyle{chicago}
\bibliography{AlkhateebRef_arxiv}

@article{bosona2020urban,
  title={Urban freight last mile logistics—challenges and opportunities to improve sustainability: a literature review},
  author={Bosona, Techane},
  journal={Sustainability},
  volume={12},
  number={21},
  pages={8769},
  year={2020},
  publisher={MDPI}
}

@article{bergmann2020integrating,
  title={Integrating first-mile pickup and last-mile delivery on shared vehicle routes for efficient urban e-commerce distribution},
  author={Bergmann, Felix M and Wagner, Stephan M and Winkenbach, Matthias},
  journal={Transportation Research Part B: Methodological},
  volume={131},
  pages={26--62},
  year={2020},
  publisher={Elsevier}
}

@article{dablanc2010impacts,
  title={The impacts of logistics sprawl: How does the location of parcel transport terminals affect the energy efficiency of goods’ movements in Paris and what can we do about it?},
  author={Dablanc, Laetitia and Rakotonarivo, Dina},
  journal={Procedia-Social and Behavioral Sciences},
  volume={2},
  number={3},
  pages={6087--6096},
  year={2010},
  publisher={Elsevier}
}

@article{wygonik2018urban,
  title={Urban form and last-mile goods movement: Factors affecting vehicle miles travelled and emissions},
  author={Wygonik, Erica and Goodchild, Anne V},
  journal={Transportation Research Part D: Transport and Environment},
  volume={61},
  pages={217--229},
  year={2018},
  publisher={Elsevier}
}

@article{alumur2021perspectives,
  title={Perspectives on modeling hub location problems},
  author={Alumur, Sibel A and Campbell, James F and Contreras, Ivan and Kara, Bahar Y and Marianov, Vladimir and O’Kelly, Morton E},
  journal={European Journal of Operational Research},
  volume={291},
  number={1},
  pages={1--17},
  year={2021},
  publisher={Elsevier}
}

@Article{Marianov2002,
  author    = {Marianov, Vladimir and Serra, Daniel and Drezner, Z},
  journal   = {Facility location: Applications and theory},
  title     = {Location problems in the public sector},
  year      = {2002},
  pages     = {119--150},
  volume    = {1},
  publisher = {Springer, Berlin},
}

@Article{Tansel1983,
  author    = {Tansel, Barbaros C and Francis, Richard L and Lowe, Timothy J},
  journal   = {Management science},
  title     = {State of the art-location on networks: a survey. Part I: the p-center and p-median problems},
  year      = {1983},
  number    = {4},
  pages     = {482--497},
  volume    = {29},
  publisher = {INFORMS},
}

@Article{Brimberg2008,
  author    = {Brimberg, Jack and Hansen, Pierre and Mladenovic, Nenad and Salhi, Said},
  journal   = {International Journal of Operations Research},
  title     = {A survey of solution methods for the continuous location-allocation problem},
  year      = {2008},
  number    = {1},
  pages     = {1--12},
  volume    = {5},
  publisher = {Operations Research Society of Taiwan, ORSTW},
}

@article{kalcsics2019districting,
  title={Districting problems},
  author={Kalcsics, J{\"o}rg and R{\'\i}os-Mercado, Roger Z},
  journal={Location science},
  pages={705--743},
  year={2019},
  publisher={Springer}
}

@Article{Hakimi1964,
  author    = {Hakimi, S Louis},
  journal   = {Operations research},
  title     = {Optimum locations of switching centers and the absolute centers and medians of a graph},
  year      = {1964},
  number    = {3},
  pages     = {450--459},
  volume    = {12},
  publisher = {informs},
}

@article{hakimi1979algorithmic,
  title={An algorithmic approach to network location problems. II: the p-median},
  author={Hakimi, SL},
  journal={SIAM J. Appl. Math},
  volume={37},
  pages={539--560},
  year={1979}
}

@article{owen1998strategic,
  title={Strategic facility location: A review},
  author={Owen, Susan Hesse and Daskin, Mark S},
  journal={European journal of operational research},
  volume={111},
  number={3},
  pages={423--447},
  year={1998},
  publisher={Elsevier}
}

@article{melo2009facility,
  title={Facility location and supply chain management--A review},
  author={Melo, M Teresa and Nickel, Stefan and Saldanha-Da-Gama, Francisco},
  journal={European journal of operational research},
  volume={196},
  number={2},
  pages={401--412},
  year={2009},
  publisher={Elsevier}
}

@article{ulukan2015survey,
  title={A survey of discrete facility location problems},
  author={Ulukan, Z and Demircio{\u{g}}lu, E},
  journal={International Journal of Industrial and Manufacturing Engineering},
  volume={9},
  number={7},
  pages={2487--2492},
  year={2015}
}

@book{drezner2004facility,
  title={Facility location: applications and theory},
  author={Drezner, Zvi and Hamacher, Horst W},
  year={2004},
  publisher={Springer Science \& Business Media}
}

@inbook{Daskin2013,
author    = {Daskin, Mark S},
publisher = {John Wiley \& Sons, Ltd},
isbn = {9781118537015},
title = {Median Problems},
booktitle = {Network and Discrete Location: Models, Algorithms, and Applications, Second Edition},
chapter = {6},
pages = {235-293},
doi = {https://doi.org/10.1002/9781118537015.ch06},
url = {https://onlinelibrary.wiley.com/doi/abs/10.1002/9781118537015.ch06},
eprint = {https://onlinelibrary.wiley.com/doi/pdf/10.1002/9781118537015.ch06},
year = {2013}
}

@article{ceselli2005branch,
  title={A branch-and-price algorithm for the capacitated p-median problem},
  author={Ceselli, Alberto and Righini, Giovanni},
  journal={Networks: An International Journal},
  volume={45},
  number={3},
  pages={125--142},
  year={2005},
  publisher={Wiley Online Library}
}

@article{guden2019dynamic,
  title={The dynamic p-median problem with mobile facilities},
  author={G{\"u}den, H{\"u}seyin and S{\"u}ral, Haldun},
  journal={Computers \& Industrial Engineering},
  volume={135},
  pages={615--627},
  year={2019},
  publisher={Elsevier}
}

@Incollection   {Kalcsics2015,
author="Kalcsics, J{\"o}rg
and R{\'i}os-Mercado, Roger Z.",
editor="Laporte, Gilbert
and Nickel, Stefan
and Saldanha da Gama, Francisco",
title="Districting Problems",
bookTitle="Location Science",
year="2019",
publisher="Springer, Cham, Switzerland",
address={},
pages="705--743",
isbn="978-3-030-32177-2",
doi="10.1007/978-3-030-32177-2_25",
url="https://doi.org/10.1007/978-3-030-32177-2_25"
}

@inproceedings{zhang2017graph,
  title={Graph edge partitioning via neighborhood heuristic},
  author={Zhang, Chenzi and Wei, Fan and Liu, Qin and Tang, Zhihao Gavin and Li, Zhenguo},
  booktitle={Proceedings of the 23rd ACM SIGKDD International Conference on Knowledge Discovery and Data Mining},
  pages={605--614},
  year={2017}
}

@inproceedings{mayer2018adwise,
  title={Adwise: Adaptive window-based streaming edge partitioning for high-speed graph processing},
  author={Mayer, Christian and Mayer, Ruben and Tariq, Muhammad Adnan and Geppert, Heiko and Laich, Larissa and Rieger, Lukas and Rothermel, Kurt},
  booktitle={2018 IEEE 38th International Conference on Distributed Computing Systems (ICDCS)},
  pages={685--695},
  year={2018},
  organization={IEEE}
}

@inproceedings{mayer2021hybrid,
  title={Hybrid edge partitioner: Partitioning large power-law graphs under memory constraints},
  author={Mayer, Ruben and Jacobsen, Hans-Arno},
  booktitle={Proceedings of the 2021 International Conference on Management of Data},
  pages={1289--1302},
  year={2021}
}

@article{lolonis1993location,
  title={Location-allocation models as decision aids in delineating administrative regions},
  author={Lolonis, Panagiotis and Armstrong, Marc P},
  journal={Computers, environment and urban systems},
  volume={17},
  number={2},
  pages={153--174},
  year={1993},
  publisher={Elsevier}
}

@inproceedings{yen2010connected,
  title={The connected p-median problem on graphs with forbidden vertices},
  author={Yen, William Chung-Kung and Chang, Shun-Chieh and Yang, Shung Chih-Shiang and Hu, Shou-Cheng},
  booktitle={Proceedings of 27th Workshop on Combinatorial Mathematics and Computation Theory, Providence University, Taichung, Taiwan},
  pages={168--173},
  year={2010},
  organization={Citeseer}
}

@article{williams2002zero,
  title={A zero-one programming model for contiguous land acquisition},
  author={Williams, Justin C},
  journal={Geographical Analysis},
  volume={34},
  number={4},
  pages={330--349},
  year={2002},
  publisher={Wiley Online Library}
}

@article{carvajal2013imposing,
  title={Imposing connectivity constraints in forest planning models},
  author={Carvajal, Rodolfo and Constantino, Miguel and Goycoolea, Marcos and Vielma, Juan Pablo and Weintraub, Andr{\'e}s},
  journal={Operations Research},
  volume={61},
  number={4},
  pages={824--836},
  year={2013},
  publisher={INFORMS}
}

@Article{SalazarAguilar2011,
  author    = {Salazar-Aguilar, Mar{\'\i}a Ang{\'e}lica and R{\'\i}os-Mercado, Roger Z and Cabrera-R{\'\i}os, Mauricio},
  journal   = {Networks and Spatial Economics},
  title     = {New models for commercial territory design},
  year      = {2011},
  number    = {3},
  pages     = {487--507},
  volume    = {\textbf{11}},
  publisher = {Springer},
}

@Article{GarciaAyala2016,
  author    = {Garc{\'\i}a-Ayala, Gabriela and Gonz{\'a}lez-Velarde, Jos{\'e} Luis and R{\'\i}os-Mercado, Roger Z and Fern{\'a}ndez, Elena},
  journal   = {International Transactions in Operational Research},
  title     = {A novel model for arc territory design: Promoting Eulerian districts},
  year      = {2016},
  number    = {3},
  pages     = {433--458},
  volume    = {\textbf{23}},
  publisher = {Wiley Online Library},
}

@article{branco1990hamiltonian,
  title={The Hamiltonian p-median problem},
  author={Branco, IM and Coelho, JD},
  journal={European Journal of Operational Research},
  volume={47},
  number={1},
  pages={86--95},
  year={1990},
  publisher={Elsevier}
}

@inproceedings{gollowitzer2011new,
  title={New models for and numerical tests of the Hamiltonian p-median problem},
  author={Gollowitzer, Stefan and Pereira, Dilson Lucas and Wojciechowski, Adam},
  booktitle={International conference on network optimization},
  pages={385--394},
  year={2011},
  organization={Springer}
}

@article{hess1965nonpartisan,
  title={Nonpartisan political redistricting by computer},
  author={Hess, Sidney Wayne and Weaver, JB and Siegfeldt, HJ and Whelan, JN and Zitlau, PA},
  journal={Operations Research},
  volume={13},
  number={6},
  pages={998--1006},
  year={1965},
  publisher={INFORMS}
}

@article{mehrotra1998optimization,
  title={An optimization based heuristic for political districting},
  author={Mehrotra, Anuj and Johnson, Ellis L and Nemhauser, George L},
  journal={Management Science},
  volume={44},
  number={8},
  pages={1100--1114},
  year={1998},
  publisher={INFORMS}
}

@article{ricca2011political,
  title={Political districting: from classical models to recent approaches},
  author={Ricca, Federica and Scozzari, Andrea and Simeone, Bruno},
  journal={4OR},
  volume={9},
  pages={223--254},
  year={2011},
  publisher={Springer}
}

@article{validi2022imposing,
  title={Imposing contiguity constraints in political districting models},
  author={Validi, Hamidreza and Buchanan, Austin and Lykhovyd, Eugene},
  journal={Operations Research},
  volume={70},
  number={2},
  pages={867--892},
  year={2022},
  publisher={INFORMS}
}

@Article{Butsch2014,
  author    = {Butsch, Alexander and Kalcsics, J{\"o}rg and Laporte, Gilbert},
  journal   = {INFORMS Journal on Computing},
  title     = {Districting for arc routing},
  year      = {2014},
  number    = {4},
  pages     = {809--824},
  volume    = {\textbf{26}},
  publisher = {INFORMS},
}

@article{alvarez2021districting,
  title={A districting application with a quality of service objective},
  author={{\'A}lvarez-Miranda, Eduardo and Pereira, Jordi},
  journal={Mathematics},
  volume={10},
  number={1},
  pages={13},
  year={2021},
  publisher={MDPI}
}

@article{ricca2008local,
  title={Local search algorithms for political districting},
  author={Ricca, Federica and Simeone, Bruno},
  journal={European Journal of Operational Research},
  volume={189},
  number={3},
  pages={1409--1426},
  year={2008},
  publisher={Elsevier}
}

@Article{Shirabe2009,
  author    = {Shirabe, Takeshi},
  journal   = {Environment and Planning B: Planning and Design},
  title     = {Districting modeling with exact contiguity constraints},
  year      = {2009},
  number    = {6},
  pages     = {1053--1066},
  volume    = {\textbf{36}},
  publisher = {SAGE Publications Sage UK: London, England},
}

@article{garfinkel1970optimal,
  title={Optimal political districting by implicit enumeration techniques},
  author={Garfinkel, Robert S and Nemhauser, George L},
  journal={Management Science},
  volume={16},
  number={8},
  pages={B--495},
  year={1970},
  publisher={INFORMS}
}

@article{dantzig1954solution,
  title={Solution of a large-scale traveling-salesman problem},
  author={Dantzig, George and Fulkerson, Ray and Johnson, Selmer},
  journal={Journal of the operations research society of America},
  volume={2},
  number={4},
  pages={393--410},
  year={1954},
  publisher={INFORMS}
}

@article{mendes2022capacitated,
  title={The Capacitated and Economic Districting Problem},
  author={Mendes, Luis Henrique Pauleti and Usberti, F{\'a}bio Luiz and Cavellucci, Celso},
  journal={INFORMS Journal on Computing},
  year={2022},
  publisher={INFORMS}
}

@book{martin2001general,
  title={General mixed integer programming: Computational issues for branch-and-cut algorithms},
  author={Martin, Alexander},
  year={2001},
  publisher={Springer}
}

@article{jafari2013new,
  title={A new method to solve the fully connected reserve network design problem},
  author={Jafari, Nahid and Hearne, John},
  journal={European Journal of Operational Research},
  volume={231},
  number={1},
  pages={202--209},
  year={2013},
  publisher={Elsevier}
}

@Article{Wang2021,
  author    = {Wang, Dian and Zhao, Jun and D'Ariano, Andrea and Peng, Qiyuan},
  journal   = {European Journal of Operational Research},
  title     = {Simultaneous node and link districting in transportation networks: Model, algorithms and railway application},
  year      = {2021},
  number    = {1},
  pages     = {73--94},
  volume    = {292},
  publisher = {Elsevier},
}

@article{onal2016optimal,
  title={Optimal design of compact and functionally contiguous conservation management areas},
  author={{\"O}nal, Hayri and Wang, Yicheng and Dissanayake, Sahan TM and Westervelt, James D},
  journal={European Journal of Operational Research},
  volume={251},
  number={3},
  pages={957--968},
  year={2016},
  publisher={Elsevier}
}

@article{farughi2020novel,
  title={A novel optimization model for designing compact, balanced, and contiguous healthcare districts},
  author={Farughi, Hiwa and Tavana, Madjid and Mostafayi, Sobhan and Santos Arteaga, Francisco J},
  journal={Journal of the Operational Research Society},
  volume={71},
  number={11},
  pages={1740--1759},
  year={2020},
  publisher={Taylor \& Francis}
}

@article{canovas2007strengthened,
  title={A strengthened formulation for the simple plant location problem with order},
  author={C{\'a}novas, L{\'a}zaro and Garc{\'\i}a, Sergio and Labb{\'e}, Martine and Mar{\'\i}n, Alfredo},
  journal={Operations Research Letters},
  volume={35},
  number={2},
  pages={141--150},
  year={2007},
  publisher={Elsevier}
}

@Book{Ahuja1988,
  author    = {Ahuja, Ravindra K and Magnanti, Thomas L and Orlin, James B},
  publisher = {Prentice Hall, New Jersey},
  title     = {Network Flows},
  year      = {1988},
address={} 
}

@article{drexl1999fast,
  title={Fast approximation methods for sales force deployment},
  author={Drexl, Andreas and Haase, Knut},
  journal={Management Science},
  volume={45},
  number={10},
  pages={1307--1323},
  year={1999},
  publisher={INFORMS}
}

@article{zoltners1983sales,
  title={Sales territory alignment: A review and model},
  author={Zoltners, Andris A and Sinha, Prabhakant},
  journal={Management Science},
  volume={29},
  number={11},
  pages={1237--1256},
  year={1983},
  publisher={INFORMS}
}

@article{cova2000contiguity,
  title={Contiguity constraints for single-region site search problems},
  author={Cova, Thomas J and Church, Richard L},
  journal={Geographical Analysis},
  volume={32},
  number={4},
  pages={306--329},
  year={2000},
  publisher={Wiley Online Library}
}

@InProceedings{Hooker1994a,
author="Hooker, J. N.",
editor="Borning, Alan",
title="Logic-based methods for optimization",
booktitle="Principles and Practice of Constraint Programming",
year="1994",
publisher="Springer, Berlin, Heidelberg",
address={},
pages="336--349",
series="Lecture Notes in Computer Science",
volume="874"
}

@ARTICLE{Osorio1999,
    author = {M. A. {Osorio
Lama} and R. {M\'{u}jica Garc\'\i a}},
    title = {Logic Cuts Generation in a Branch and Cut Framework for Location Problems},
    journal = {Investigaci{\'o}n Operativa},
    year = {1999},
    volume = {\textbf{8}},
    number={1-3},
    pages = {155--166}
}

@Article{Osorio2003,
  author    = {Osorio, Mar{\i}?a A and Laguna, Manuel},
  journal   = {European Journal of Operational Research},
  title     = {Logic cuts for multilevel generalized assignment problems},
  year      = {2003},
  number    = {1},
  pages     = {238--246},
  volume    = {\textbf{151}},
  publisher = {Elsevier},
}

@article{israeli2002shortest,
  title={Shortest-path network interdiction},
  author={Israeli, Eitan and Wood, R Kevin},
  journal={Networks: An International Journal},
  volume={40},
  number={2},
  pages={97--111},
  year={2002},
  publisher={Wiley Online Library}
}

@article{sefair2017defender,
  title={A defender-attacker model and algorithm for maximizing weighted expected hitting time with application to conservation planning},
  author={Sefair, Jorge A and Smith, J Cole and Acevedo, Miguel A and Fletcher Jr, Robert J},
  journal={Iise Transactions},
  volume={49},
  number={12},
  pages={1112--1128},
  year={2017},
  publisher={Taylor \& Francis}
}

@article{baycik2019robust,
  title={Robust location of hidden interdictions on a shortest path network},
  author={Baycik, N Orkun and Sullivan, Kelly M},
  journal={IISE Transactions},
  volume={51},
  number={12},
  pages={1332--1347},
  year={2019},
  publisher={Taylor \& Francis}
}

@article{wei2024supervalid,
  title={On supervalid inequalities for binary interdiction games},
  author={Wei, Ningji and Walteros, Jose L},
  journal={Mathematical Programming},
  pages={1--42},
  year={2024},
  publisher={Springer}
}

@article{yuan2020note,
  title={A note on the lifted Miller--Tucker--Zemlin subtour elimination constraints for routing problems with time windows},
  author={Yuan, Yuan and Cattaruzza, Diego and Ogier, Maxime and Semet, Fr{\'e}d{\'e}ric},
  journal={Operations Research Letters},
  volume={48},
  number={2},
  pages={167--169},
  year={2020},
  publisher={Elsevier}
}

@article{yuan2021mixed,
  title={Mixed integer programming formulations for the generalized traveling salesman problem with time windows},
  author={Yuan, Yuan and Cattaruzza, Diego and Ogier, Maxime and Rousselot, Cyriaque and Semet, Fr{\'e}d{\'e}ric},
  journal={4OR},
  volume={19},
  pages={571--592},
  year={2021},
  publisher={Springer}
}

@article{kiilerich2018new,
  title={New large-scale data instances for CARP and new variations of CARP},
  author={Kiilerich, Lone and W{\o}hlk, Sanne},
  journal={INFOR: Information Systems and Operational Research},
  volume={56},
  number={1},
  pages={1--32},
  year={2018},
  publisher={Taylor \& Francis}
}

@Inbook{Yanık2020,
author="Yan{\i}k, Seda
and Bozkaya, Burcin",
editor="R{\'i}os-Mercado, Roger Z.",
title="A Review of Districting Problems in Health Care",
bookTitle="Optimal Districting and Territory Design",
year="2020",
publisher="Springer International Publishing",
address="Cham",
pages="31--55",
isbn="978-3-030-34312-5",
doi="10.1007/978-3-030-34312-5_3",
url="https://doi.org/10.1007/978-3-030-34312-5_3"
}

@article{kassem2023models,
  title={Models and network insights for edge-based districting with simultaneous location-allocation decisions},
  author={Kassem, Zeyad and Escobedo, Adolfo R},
  journal={IISE Transactions},
  volume={55},
  number={8},
  pages={768--780},
  year={2023},
  publisher={Taylor \& Francis}
}
\endgroup          

\clearpage
\begingroup
\setcounter{section}{0}
\renewcommand\thesection{\Alph{section}}
\setcounter{equation}{11}
\setcounter{footnote}{0}
\renewcommand*{\thefootnote}{\fnsymbol{footnote}}

\begin{center}
{\LARGE\bfseries\emph{Supplemental Online Materials for\\[0.4em]
``The Edge-based Contiguous $p$-median Problem with Connections to Logistics Districting''}}
\end{center}
\vspace{1em}

\section{Proof of Theorem \Cref{thm:tight}}\label{append:2}
\renewcommand{\thetheorem}{\getrefnumber{thm:tight}}
\begin{theorem}
Define the polytope $\mathcal{P}_\text{SPC-1}=\{\mathbf{(x,w)} \in [0,1]^{|V|\times |E|} \times [0,1]^{|V|} : \mathbf{(x,w)}  \\ \text{ satisfies \eqref{const:110}--\eqref{const:1121}, \eqref{eqn:ineq_new}}\}$ and the polytope $\mathcal{P}_\text{SPC-2}=\{\mathbf{(x,w)} \in [0,1]^{|V|\times |E|} \times [0,1]^{|V|} : \mathbf{(x,w)} \\ \text{ satisfies \eqref{const:110}--\eqref{const:1121}, \eqref{eqn:ineq_new_1}}\}$. For any instance of EC$p$M, $\mathcal{P}_\text{SPC-2} \subseteq \mathcal{P}_\text{SPC-1}$, and this inclusion can be strict.
\end{theorem}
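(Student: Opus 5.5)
The proof has two parts: show that every point of $\mathcal{P}_\text{SPC-2}$ lies in $\mathcal{P}_\text{SPC-1}$, then exhibit one instance and one fractional point that separates them.

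\textbf{Inclusion.} Take any $(\mathbf{x},\mathbf{w})\in\mathcal{P}_\text{SPC-2}$. It already satisfies \eqref{const:110}--\eqref{const:1121} and the bounds $[0,1]$, so only \eqref{eqn:ineq_new} needs checking. Fix $i\in V$ and $(j,k)\in E$. For each of the $|SP_{i,(j,k)}|$ edges $(l,m)\in SP_{i,(j,k)}$, SPC-2 gives $x_{i,(j,k)}\le x_{i,(l,m)}$. Summing these inequalities over $(l,m)\in SP_{i,(j,k)}$ yields
\begin{align*}
|SP_{i,(j,k)}|\,x_{i,(j,k)}\le \sum_{(l,m)\in SP_{i,(j,k)}} x_{i,(l,m)},
\end{align*}
which is exactly SPC-1 for the pair $(i,(j,k))$. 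If $SP_{i,(j,k)}=\emptyset$ (that is, $i$ is an endpoint of $(j,k)$), both sides are $0$ and the inequality holds trivially. Hence $\mathcal{P}_\text{SPC-2}\subseteq\mathcal{P}_\text{SPC-1}$. In words, SPC-1 is a nonnegative aggregation of SPC-2 inequalities, so it cannot cut off anything that SPC-2 keeps.

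\textbf{Strictness.} We need a point in $\mathcal{P}_\text{SPC-1}\setminus\mathcal{P}_\text{SPC-2}$, which requires a pair $(i,(j,k))$ with $|SP_{i,(j,k)}|\ge 2$. Only then can $x_{i,(j,k)}$ exceed one path variable while staying at most the average of the path variables. I would use a path graph on nodes $1\!-\!2\!-\!3\!-\!4$ with unit lengths and $p=1$ (or reuse the graph of Figure \ref{Fig:mot1} if more convenient). Then $SP_{1,(3,4)}=\{(1,2),(2,3)\}$. Set $w_1=1$ and $x_{1,(3,4)}=\tfrac12$, $x_{1,(2,3)}=1$, $x_{1,(1,2)}=0$. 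For SPC-1 at $(1,(3,4))$: $2\cdot\tfrac12=1\le 0+1$, so it holds. For SPC-2: $\tfrac12\le x_{1,(1,2)}=0$ fails.

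The remaining step is to complete the point so that \eqref{const:110}--\eqref{const:1121} and all other SPC-1 rows hold. Each edge must receive total allocation $1$ and no $x_{i,e}$ may exceed $w_i$. With $p=1$ and $w_i$ fractional, I would spread the residual allocation of $(1,2)$ and $(3,4)$ onto another center, for example node $2$ or node $3$ with fractional $w$, while keeping $\sum_i w_i=1$.

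The main obstacle is this bookkeeping. If it is easier, take $p=2$ with $w_1=w_3=1$. Then allocate $x_{3,(1,2)}=1$, $x_{3,(3,4)}=\tfrac12$, and $x_{3,(2,3)}=0$. Now verify SPC-1 for center $3$: $SP_{3,(1,2)}=\{(2,3)\}$ would give $1\le 0$, which violates SPC-1, so this allocation must be adjusted. I would therefore search over a small graph, possibly adding a pendant edge, until every SPC-1 row for the auxiliary center is satisfied while the chosen SPC-2 row for center $1$ stays violated. A direct check of all rows then completes the strictness claim.
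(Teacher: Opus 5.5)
Your inclusion argument is correct and is the same as the paper's: sum the SPC-2 rows over $(l,m)\in SP_{i,(j,k)}$ to get the SPC-1 row.

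The gap is in the strictness half. You never produce a point of $\mathcal{P}_{\text{SPC-1}}\setminus\mathcal{P}_{\text{SPC-2}}$, and each candidate you write down fails for a concrete reason.

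\begin{itemize}
\item Your point for center $1$ is not even in $\mathcal{P}_{\text{SPC-1}}$. Since $SP_{1,(2,3)}=\{(1,2)\}$, the SPC-1 row for the pair $(1,(2,3))$ reads $x_{1,(2,3)}\le x_{1,(1,2)}$, i.e., $1\le 0$. Whenever $|SP_{i,(j,k)}|=1$, the SPC-1 and SPC-2 rows coincide. So the edge incident to the center must carry at least as much allocation (for that center) as the next edge on the path. You have this ordering reversed, and the violation you found for center $3$ in the $p=2$ attempt is the mirror image of the same mistake.
\item The $p=1$ route is impossible in principle. Constraints \eqref{const:110}, \eqref{const:111} with $p=1$, and \eqref{const:1121} give $\sum_{i\in V}(w_i-x_{i,e})=0$ with every term nonnegative. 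Hence $x_{i,e}=w_i$ for every $i$ and every edge $e$, every SPC-2 row holds with equality, and the two polytopes coincide. Strictness therefore needs $p\ge 2$.
\end{itemize}

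Ending with ``I would search over a small graph'' leaves the existential claim unproved.

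The paper's construction shows the right pattern. On a three-hop shortest path through $i_0,i_1,i_2,i_3$ with centers $i_1$ and $i_3$, it sets $x_{i_3,(i_2,i_3)}=1$, $x_{i_3,(i_1,i_2)}=0.7$, $x_{i_3,(i_0,i_1)}=0.8$, with complements to $i_1$. Then $2(0.8)\le 1+0.7$ holds while $0.8\le 0.7$ fails. The paper then has to allocate the remaining edges of a general graph through a case analysis.

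Your idea of fixing one concrete small instance is actually cleaner, and it works once the ordering is corrected. Take the path on nodes $1,2,3,4$ with edges $(1,2),(2,3),(3,4)$, unit lengths, and $p=2$. Set $w_1=w_2=1$, $x_{1,(1,2)}=1$, $x_{1,(2,3)}=0$, $x_{1,(3,4)}=\tfrac12$, $x_{2,(2,3)}=1$, $x_{2,(3,4)}=\tfrac12$, and all other variables $0$.
\begin{itemize}
\item Constraints \eqref{const:110}--\eqref{const:1121} hold.
\item The only nontrivial SPC-1 rows are $0\le 1$ for $(1,(2,3))$, $2\cdot\tfrac12\le 1+0$ for $(1,(3,4))$, and $\tfrac12\le 1$ for $(2,(3,4))$.
\item The SPC-2 row $x_{1,(3,4)}\le x_{1,(2,3)}$ fails, since $\tfrac12>0$.
\end{itemize}
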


\begin{proof}
Constraint \eqref{eqn:ineq_new} can be obtained as a linear combination of Constraint \eqref{eqn:ineq_new_1}; specifically, for any $i \in V$ and $(j,k) \in E$, summing \eqref{eqn:ineq_new_1} over all $(l,m) \in SP_{i,(j,k)}$ yields Constraint \eqref{eqn:ineq_new}. This implies that all feasible solutions to $\mathcal{P}_\text{SPC-2}$ are also feasible to $\mathcal{P}_\text{SPC-1}$ and, thus, $\mathcal{P}_\text{SPC-2} \subseteq \mathcal{P}_\text{SPC-1}$.

We show that the inclusion can be strict. Let $p\geq 2$ and assume that there are at least two nodes with a shortest path of three hops (i.e., there are three consecutive edges separating them). We construct a solution $\mathbf{x}^\prime\in \mathcal{P}_\text{SPC-1} \setminus \mathcal{P}_\text{SPC-2}$. An example is depicted in Figure \ref{Fig:proof} for convenience. Therein, squiggly lines indicate a collapsed non-empty set of edges along some indicated shortest path. A number adjacent to a dotted line indicates the value of the allocation variable associated with that edge and center node $i_1$, and a number adjacent to a solid line indicates the value of the allocation variable associated with that edge and center node $i_3$.

\begin{figure}[h]
  \begin{center}
  \begin{tikzpicture}
    \Vertex[x=-4,y=0,color=white,label=$i_0$]{A}        \Vertex[x=-2,y=0,color=white,style=dashed,label=$i_1$]{B}
    \Vertex[x=0,y=0,color=white,label=$i_2$]{C}
    \Vertex[x=2,y=0,color=black,label=\textcolor{white}{$i_3$}]{D}
    \Vertex[x=0,y=-2,color=white,label=$\hat{i_2}$]{E}		\Vertex[x=2,y=-2,color=white,label=$\hat{j_2}$]{F}
    \Vertex[x=-4,y=-2,color=white,label=$\hat{j_1}$]{H}
    \Vertex[x=-6,y=-2,color=white,label=$\hat{i_1}$]{G}
    \Vertex[x=4,y=0,color=white,label=$\hat{i_3}$]{I}
    \Vertex[x=6,y=0,color=white,label=$\hat{j_3}$]{J}
    \Edge[lw=2pt,label=0.2,style=dashed,bend=10,position=above](A)(B)
    \Edge[lw=2pt,label=0.8,bend=-10,position=below](A)(B)       \Edge[lw=2pt,label=0.3,bend=10,style=dashed,position=above](B)(C)        \Edge[lw=2pt,label=0.7,bend=-10,position=below](B)(C)
    \Edge[lw=2pt](C)(D)
    \draw[-,label=0.8,decorate,decoration=snake] (C) to[bend right=10] (E);
    \draw[-,style=dashed,decorate,decoration=snake] (C) to[bend right=-10] (E);
    \draw[-,style=dashed,label=0.8,decorate,decoration=snake] (A) to[bend right=10] (H);
    \draw[-,decorate,decoration=snake] (A) to[bend right=-10] (H);
     \Edge[lw=2pt,label=0.3,bend=10,style=dashed,position=above](E)(F)
    \Edge[lw=2pt,label=0.7,bend=-10,position=below](E)(F)
    \Edge[lw=2pt,label=0.2,bend=10,style=dashed,position=above](G)(H)
    \Edge[lw=2pt,label=0.8,bend=-10,position=below](G)(H)
    \Edge[lw=2pt](I)(J)
    \draw[-,decorate,decoration=snake] (I) to (D);
    \coordinate (dummy1) at (5,-1.5);
    \coordinate (dummy2) at (5.5,-1.5);
    \draw[dotted, black,line width=2pt] (dummy1) -- (dummy2) node[pos=1, right, font=\small, black,line width=2pt] {First Territory};
    \coordinate (dummy3) at (5,-2);
    \coordinate (dummy4) at (5.5,-2);
    \draw[black,line width=2pt] (dummy3) -- (dummy4) node[pos=1, right, font=\small, black,line width=2pt] {Second Territory};
  \end{tikzpicture}\\
  \end{center}
  \caption{Subgraph Depicting Fractional Solutions for the Linear Relaxation of SPC-1 and SPC-2.} \label{Fig:proof}
\end{figure}

Without loss of generality, consider the shortest path from $i_0$ to $i_3$, with $|SP_{i_0,i_3}|=3$, and fix $i_1$ and $i_3$ as center nodes ($w_{i_3}=w_{i_1}=1$). If $p>2$, select the remaining $(p-2)$ centers arbitrarily from $V \setminus \lbrace i_1,i_3 \rbrace$; then set the allocation variables corresponding to edges $(i_0,i_1)$, $(i_1,i_2)$, and $(i_2,i_3)$ as follows:\bla

\begin{itemize}
  \item \underline{Edge $(i_0,i_1)$}. Set $x_{i_1,(i_0,i_1)}=0.2$ and $x_{i_3,(i_0,i_1)}=0.8$.
  \item \underline{Edge $(i_1,i_2)$}. Set $x_{i_1,(i_1,i_2)}=0.3$ and $x_{i_3,(i_1,i_2)}=0.7$.
  \item \underline{Edge $(i_2,i_3)$}. Set $x_{i_3,(i_2,i_3)}=1$.
\end{itemize}

With consideration of the above three edges, proceed according to three mutually exclusive cases.

\noindent\textbf{Case 1:}
\underline{$(i_0,i_1) \in SP_{i_1,(\hat{i_1},\hat{j_1})}$}. Set $x_{i_1,(\hat{i_1},\hat{j_1})}=0.2$ and $x_{i_1,(l,m)}=0.2 \ \forall (l,m) \in SP_{i_0,(\hat{i_1},\hat{j_1})}$; additionally, set $x_{i_3,(\hat{i_1},\hat{j_1})}=0.8$, and $x_{i_3,(l,m)}=0.8 \ \forall (l,m) \in SP_{i_0,(\hat{i_1},\hat{j_1})}$.

\noindent\textbf{Case 2:}
\underline{$(i_1,i_2) \in SP_{i_1,(\hat{i_2},\hat{j_2})}$ and $(i_2,i_3) \notin SP_{i_1,(\hat{i_2},\hat{j_2})}$}. Set $x_{i_1,(\hat{i_2},\hat{j_2})}=0.3$ and $x_{i_1,(l,m)}=0.3 \ \forall (l,m) \in SP_{i_2,(\hat{i_2},\hat{j_2})}$; additionally, set $x_{i_3,(\hat{i_2},\hat{j_2})}=0.7$ and $x_{i_3,(l,m)}=0.7 \ \forall (l,m) \in SP_{i_2,(\hat{i_2},\hat{j_2})}$.

\noindent\textbf{Case 3:} \underline{$(i_1,i_2), (i_2,i_3) \in SP_{i_1,(\hat{i_3},\hat{j_3})}$}. Set $x_{i_3,(\hat{i_3},\hat{j_3})}=1$ and $x_{i_3,(l,m)}=1 \ \forall (l,m) \in SP_{i_3,(\hat{i_3},\hat{j_3})}$. To finish constructing the solution, allocate all other edges in full to center $i_1$.

The constructed solution $\mathbf{x}^\prime$ satisfies Constraints \eqref{const:110}--\eqref{const:1121} and \eqref{eqn:ineq_new}: The sum of the allocation variables for each edge is equal to 1; the number of territories is equal to $p$; and each edge is allocated to a node, only if the node is chosen as a center node.

Now, isolating Constraint \eqref{eqn:ineq_new} for edge $(i_0,i_1)$ and node $i_3$, the chosen variable settings give that
\begin{align}
 |SP_{i_3,(i_0,i_1)}| x_{i_3,\left(i_0,i_1\right)}= 2 \cdot (0.8) \leq 1+0.7 = x_{i_3,\left(i_1,i_2\right)} + x_{i_3,\left(i_2,i_3\right)}.
\end{align}
and, thus, the constraint is satisfied. On the other hand, Constraint \eqref{eqn:ineq_new_1} is violated since
\begin{align}
 x_{i_3, (i_0,i_1)}=0.8 \nleq 0.7 = x_{i_3, (i_1,i_2)}.
\end{align}
\end{proof}

\section{Proof of Theorem \Cref{thm:tighter}}\label{append:3}
\renewcommand{\thetheorem}{\getrefnumber{thm:tighter}}
\begin{theorem}
Let $\mathcal{P}_\text{SPC-2}$ be the polytope defined in \Cref{thm:tight} and define the polytope $\mathcal{P}_\text{SPC-3}=\{\mathbf{(x,w)} \in [0,1]^{|V|\times |E|} \times [0,1]^{|V|} : \mathbf{(x,w)} \text{ satisfies \eqref{const:110}--\eqref{const:1121}, \eqref{eqn:ineq_new_2}}\}$. Then, $\mathcal{P}_\text{SPC-2} = \mathcal{P}_\text{SPC-3}$.
\end{theorem}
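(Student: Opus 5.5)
Both inclusions reduce to comparing the two families of inequalities, since the remaining constraints \eqref{const:110}--\eqref{const:1121} and the box bounds are shared. Throughout I fix one tie-breaking rule for shortest paths (e.g., a shortest-path tree rooted at each $i$). Then $SP_{i,(j,k)}$ is the tree path from $i$ to the nearer endpoint of $(j,k)$, and these paths are nested: for any edge $(l,m)\in SP_{i,(j,k)}$ we have $SP_{i,(l,m)}\subseteq SP_{i,(j,k)}$. This consistency is exactly what makes the predecessor structure in \eqref{eqn:ineq_new_2} well defined, so I will state it explicitly as a standing assumption at the start of the proof.

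\emph{Inclusion $\mathcal{P}_\text{SPC-2}\subseteq\mathcal{P}_\text{SPC-3}$.} Each inequality of \eqref{eqn:ineq_new_2} is indexed by $(l,m)\in SP_{i,(j,k)}\cap\sigma(\{(j,k)\})$. This index set is a subset of the one used in \eqref{eqn:ineq_new_1}, so every SPC-3 inequality already appears in SPC-2. Hence any point satisfying SPC-2 satisfies SPC-3.

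\emph{Inclusion $\mathcal{P}_\text{SPC-3}\subseteq\mathcal{P}_\text{SPC-2}$.} Fix $i$ and $(j,k)$, and write $SP_{i,(j,k)}=\{e_1,\dots,e_r\}$ in order from $i$, with $e_1$ incident to $i$ and $e_r$ incident to $(j,k)$. If $r=0$, meaning $(j,k)$ is incident to $i$, SPC-2 imposes nothing and we are done. Otherwise I prove $x_{i,(j,k)}\le x_{i,e_t}$ for every $t$ by chaining SPC-3 inequalities along the path:
\[
x_{i,(j,k)}\le x_{i,e_r}\le x_{i,e_{r-1}}\le\cdots\le x_{i,e_1}.
\]
The first link is the SPC-3 inequality for $(j,k)$, because $e_r$ is adjacent to $(j,k)$ and lies on its shortest path. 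For the link $x_{i,e_{s+1}}\le x_{i,e_s}$, I use nestedness: $SP_{i,e_{s+1}}=\{e_1,\dots,e_s\}$, so $e_s$ is the edge of $SP_{i,e_{s+1}}$ adjacent to $e_{s+1}$, and SPC-3 applied to the pair $(i,e_{s+1})$ gives exactly this inequality. Transitivity then yields every inequality of \eqref{eqn:ineq_new_1}, and this argument uses only linear inequalities, so it is valid for fractional points.

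The main obstacle is the nestedness claim itself, and in particular checking that $SP_{i,e_{s+1}}$ ends at the endpoint of $e_{s+1}$ shared with $e_s$. The concern is that under Definition \ref{def_2} the nearer endpoint of $e_{s+1}$ could in principle be its other endpoint, which would change the predecessor edge. I will settle this using the subpath optimality of shortest paths together with strictly positive edge lengths. The endpoint $u$ of $e_{s+1}$ reached first along the path has distance from $i$ strictly smaller than that of its far endpoint, since the far endpoint's distance equals $u$'s distance plus a positive length. So $u$ is the nearer endpoint, and by the fixed tree the path to $u$ is the prefix $e_1,\dots,e_s$.

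A further subtlety is that $SP_{i,(j,k)}\cap\sigma(\{(j,k)\})$ might contain several edges, for instance $e_{r-1}$ if it happens to be adjacent to $(j,k)$. This only adds more SPC-3 inequalities, each of which is also an SPC-2 inequality, so it does not affect the equality.
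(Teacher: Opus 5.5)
Your proposal is correct and follows essentially the same route as the paper. SPC-3 is a subfamily of SPC-2, and conversely every SPC-2 inequality is obtained by chaining the SPC-3 predecessor inequalities along $SP_{i,(j,k)}$ and applying transitivity; since this is purely linear, it holds on the relaxation. The paper's proof tacitly assumes the nestedness you state explicitly, namely that the predecessor of an edge in $SP_{i,(k,l)}$ coincides with its predecessor in $SP_{i,(j,k)}$. Your standing assumption of a fixed shortest-path tree from each $i$, together with strictly positive lengths, is what makes that step rigorous when shortest paths are not unique.
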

\begin{proof}
Constraints \eqref{eqn:ineq_new_2} enforce that, if edge $(j,k)$ is assigned to center node $i$, then the preceding edge $(k,l)$ in $SP_{i,(j,k)}$ must also be assigned to $i$. By the same logic, if $(k,l)$ is assigned to $i$, then the preceding edge $(l,o)$ on the shortest path $SP_{i,(k,l)}$ must also be assigned to $i$. By transitivity, if $x_{i,(j,k)} \leq x_{i,(k,l)}$ and $x_{i,(k,l)} \leq x_{i,(l,o)}$, then $x_{i,(j,k)} \leq x_{i,(l,o)}$. The same logic can be extended to all adjacent pairwise edges along $SP_{i,(j,k)}$. Constraints \eqref{eqn:ineq_new_1} enforce exactly the same logic, but they list all inequalities explicitly (i.e., without taking advantage of the implied transitivity). This means that $\mathcal{P}_\text{SPC-2}$ and $\mathcal{P}_\text{SPC-3}$ are equal.
\end{proof}

\section{Proof of Theorem \Cref{thm:3}}\label{append:4}
\renewcommand{\thetheorem}{\getrefnumber{thm:3}}
\begin{theorem}
Let $G=\left(V,E\right)$ be an undirected connected planar graph with $|E| \geq 2$. In an optimal solution to E$p$M, if edge $(j,k)$ is assigned to center node $i$ (i.e., $x_{i,(j,k)}=1$), then there exists an optimal solution where all edges in $SP_{i,(j,k)}$ are also assigned to $i$.
\end{theorem}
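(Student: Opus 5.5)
The plan is an exchange argument that uses the separability of the E$p$M objective once the centers are fixed. Take an optimal solution $(\mathbf{x}^*,\mathbf{w}^*)$ with center set $C=\{i' : w^*_{i'}=1\}$, and suppose $x^*_{i,(j,k)}=1$. With $\mathbf{w}^*$ fixed, constraints \eqref{const:110} and \eqref{const:1121} decouple across edges, and the objective \eqref{obj} is a sum of independent per-edge terms $d_{i',(l,m)}$. So an allocation is optimal for the fixed $C$ exactly when every edge goes to some center in $C$ minimizing its distance. In particular $d_{i,(j,k)}=\min_{i'\in C} d_{i',(j,k)}$, since otherwise moving $(j,k)$ to a strictly closer center would lower the objective.

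\textbf{Key claim.} For every edge $(l,m)\in SP_{i,(j,k)}$, the center $i$ is also a closest center, i.e.\ $d_{i,(l,m)}=\min_{i'\in C} d_{i',(l,m)}$.

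\textbf{Setting up the path.} Orient $SP_{i,(j,k)}$ from $i$ and write it as $i=v_0,v_1,\dots,v_r=j$, where $j$ is the endpoint of $(j,k)$ nearer to $i$ (Definition~\ref{def_2}). Take $(l,m)=(v_{s},v_{s+1})$. By subpath optimality of shortest paths, $d_{i,(l,m)}=\mathrm{dist}(i,v_s)$, and
\[
d_{i,(j,k)}=\mathrm{dist}(i,v_s)+\ell_{(l,m)}+\mathrm{dist}(v_{s+1},j).
\]

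\textbf{Proof of the claim by contradiction.} Suppose some $i'\in C$ has $d_{i',(l,m)}<d_{i,(l,m)}$. Let $x\in\{v_s,v_{s+1}\}$ be the endpoint of $(l,m)$ attaining $d_{i',(l,m)}$. Walk from $i'$ to $x$, then (if needed) across $(l,m)$ to $v_{s+1}$, then along the path to $j$. This gives
\[
d_{i',(j,k)}\le d_{i',(l,m)}+\ell_{(l,m)}+\mathrm{dist}(v_{s+1},j)<d_{i,(j,k)},
\]
which contradicts the optimality of allocating $(j,k)$ to $i$. Hence $i$ ties for the minimum distance on every edge of the path.

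\textbf{Constructing the new solution.} Define $\mathbf{x}'$ from $\mathbf{x}^*$ by reassigning every edge of $SP_{i,(j,k)}$ to $i$ and leaving everything else unchanged. This preserves \eqref{const:110}--\eqref{const:11}, because $i\in C$ and $\mathbf{w}$ is unchanged. By the key claim, each reassigned edge keeps the same cost, so $\mathbf{x}'$ has the same objective value and is optimal. The edge $(j,k)$ itself is untouched, so it stays with $i$, and all edges in $SP_{i,(j,k)}$ are now assigned to $i$, as required. The hypothesis $|E|\ge 2$ only ensures the statement is non-vacuous; it plays no role in the argument.

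\textbf{Expected obstacle.} The delicate step is the bookkeeping around Definition~\ref{def_2} and ties. One must use the same tie-broken shortest path $SP_{i,(j,k)}$ consistently, and confirm that $d_{i,(l,m)}$ is measured to the endpoint $v_s$ on the path side, which subpath optimality guarantees. One must also check that $d_{i',(j,k)}$ is bounded above by any walk to either endpoint of $(j,k)$, which holds by the min in Definition~\ref{def_2}. Ties are harmless here, since the argument only needs $i$ to be \emph{a} closest center on each path edge, not the unique one.
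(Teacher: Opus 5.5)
Your proof is correct and rests on the same key observation as the paper's: a center $i'$ strictly closer than $i$ to some edge of $SP_{i,(j,k)}$ would also be strictly closer to $(j,k)$, which contradicts the optimality of assigning $(j,k)$ to $i$. You organize it more cleanly than the paper --- fixing the center set, showing that $i$ ties for the minimum distance on every path edge, and then reassigning those edges at zero change in cost --- and your single walk bound covers both of the paper's cases on which endpoint of $(l,m)$ is nearer to $i'$.
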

\begin{proof}
Assume by contradiction that $(j,k)$ is assigned to $i$, but there is some edge along $SP_{i,(j,k)}$ not allocated to node $i$, and that this solution achieves a lower dispersion value than the solution where $(j,k)$ is assigned to $i$ and all edges along $SP_{i,(j,k)}$ are allocated to $i$. Without loss of generality, let $ (l,m) \in SP_{i, (j,k)}$ be the edge that is instead allocated to center node $i^\prime\neq i,j,k$, and let node $l$ be the tail node of $SP_{i,(l,m)}$.

The only difference between the dispersion measure of the two solutions is related to the difference between $d_{i^{\prime},(j,k)}$ and $d_{i,(j,k)}$, because the location and allocation variables associated with all other node-edge pairings have identical values. We consider two possible cases and accompany them with helpful illustrations (see Figure \ref{illustration}). \\
\noindent\textbf{Case 1:} \underline{$l$ is the tail node $SP_{i^{\prime},(l,m)}$} (Figure \ref{case1}).\\
Since $d_{i^\prime, (l,m)}<d_{i, (l,m)}$ holds, we have that
\begin{subequations}
    \begin{align}
    d_{i, (j,k)}
    \label{equality_11}
    &=d_{i, (l,m)} + d_{l, (j,k)} \\
    &=d_{i, (l,m)} + l_{(l,m)} + d_{m, (j,k)} \\
    &> d_{i^\prime, (l,m)}+d_{m, (j,k)} \label{inequality_12}\\
    &=d_{i^\prime, (j,k)}.
    \end{align} 
\end{subequations}
\begin{figure}[h!]
    \begin{subfigure} [b]{0.45\textwidth}
	\begin{center}
		\begin{tikzpicture}
		\Vertex[x=2,y=0,color=white,label=$k$]{A}
		\Vertex[x=0,y=0,color=white,label=$j$]{B}
        \Vertex[x=4,y=-3,color=white,label=$i$]{D}		
        \Vertex[x=0.75,y=-3,color=white,label=$i^{\prime}$]{G}	
        \Vertex[x=2,y=-3,color=white,label=$l$]{E}
        \Vertex[x=2,y=-1.5,color=white,label=$m$]{F}
		\Edge[lw=2pt](A)(B)
		\Edge[style={dashed}](A)(F)
        \Edge[style={dashed}](G)(E)
        \Edge[lw=2pt](E)(F)
        \Edge[style={dashed}](E)(D)
				\end{tikzpicture}\\
    \caption{Planar Graph Example for Case 1.} 
    \label{case1}
	\end{center}
 
 \end{subfigure}
 \begin{subfigure} [b]{0.45\textwidth}
 	\begin{center}
		\begin{tikzpicture}
		\Vertex[x=2,y=0,color=white,label=$k$]{A}
		\Vertex[x=0,y=0,color=white,label=$j$]{B}
        \Vertex[x=4,y=-3,color=white,label=$i$]{D}		
        \Vertex[x=0.75,y=-1.5,color=white,label=$i^{\prime}$]{G}		
        \Vertex[x=2,y=-3,color=white,label=$l$]{E}
        \Vertex[x=2,y=-1.5,color=white,label=$m$]{F}
		\Edge[lw=2pt](A)(B)
		\Edge[style={dashed}](A)(F)
        \Edge[style={dashed}](G)(F)
        \Edge[lw=2pt](E)(F)
        \Edge[style={dashed}](E)(D)
		\end{tikzpicture}\\
        \caption{Planar Graph Example for Case 2.}
        \label{case2}
	\end{center}
 
 \end{subfigure}
 
 \caption{Planar Graph Examples to Illustrate Two Cases for the Proof of Theorem \Cref{thm:3}}
 \label{illustration}
 \end{figure}

The only difference between the dispersion measure of the two solutions is related to the difference between $d_{i^{\prime},(j,k)}$ and $d_{i,(j,k)}$, because the location and allocation variables associated with all other node-edge pairings have identical values. We consider two possible cases and accompany them with helpful illustrations (see Figure \ref{illustration}). \\
\noindent\textbf{Case 1:} \underline{$l$ is the tail node $SP_{i^{\prime},(l,m)}$} (Figure \ref{case1}).\\
Since $d_{i^\prime, (l,m)}<d_{i, (l,m)}$ holds, we have that
\begin{subequations}
    \begin{align}
    d_{i, (j,k)}
    \label{equality_111}
    &=d_{i, (l,m)} + d_{l, (j,k)} \\
    \label{inequality_121}
    &> d_{i^\prime, (l,m)}+d_{l, (j,k)}\\
    &=d_{i^\prime, (j,k)}.
    \end{align} 
\end{subequations}
Equation \eqref{equality_111} corresponds to the distance from node $i$ to edge $(j,k)$, whereas \eqref{inequality_121} corresponds to the distance from $i^\prime$ to $(j,k)$. Since $d_{i^\prime, (l,m)}<d_{i, (l,m)}$ holds, then $d_{i, (j,k)}>d_{i^\prime, (j,k)}$. This contradicts the assumption that center node $i$ is the optimal center for edge $(j,k)$.\\
\noindent\textbf{Case 2:} \underline{$m$ is the tail node of $SP_{i^{\prime},(l,m)}$} (Figure \ref{case2}).\\
In this case, since $d_{i^\prime, (l,m)}<d_{i, (l,m)}$ and $l_{(l,m)} >0$ hold, we have that

Equation \eqref{equality_11} corresponds to the distance from node $i$ to edge $(j,k)$, whereas \eqref{inequality_12} corresponds to the distance from $i^\prime$ to $(j,k)$. Since $d_{i^\prime, (l,m)}<d_{i, (l,m)}$ and $l_{(l,m)} >0 $ hold, then $d_{i, (j,k)}>d_{i^\prime, (j,k)}$. This contradicts the assumption that node $i$ is an optimal center for edge $(j,k)$.
\end{proof}
\endgroup

\end{document}